\documentclass{article}

\usepackage[T1]{fontenc}
\usepackage[utf8]{inputenc}
\usepackage{microtype}
\usepackage{graphicx}
\usepackage{subcaption}
\usepackage{booktabs}
\usepackage{tabularx}
\usepackage{hyperref}

\usepackage[accepted]{icml2026}
\makeatletter
\renewcommand{\ICML@appearing}{Accepted to the \textit{ICML 2026
Workshop on Hypothesis Testing}, Seoul, South Korea, 2026.
Copyright 2026 by the author(s).}
\makeatother
\usepackage{amsmath}
\usepackage{amssymb}
\usepackage{mathtools}
\usepackage{amsthm}
\usepackage{enumitem}
\usepackage[capitalize,noabbrev]{cleveref}
\usepackage{xcolor}

\theoremstyle{plain}
\newtheorem{lemma}{Lemma}
\newtheorem{corollary}{Corollary}
\theoremstyle{definition}
\newtheorem{definition}{Definition}

\icmltitlerunning{Resolution Diagnostics for Paired LLM Evaluation}

\begin{document}

\twocolumn[
\icmltitle{Resolution Diagnostics for\\Paired LLM Evaluation}

\begin{icmlauthorlist}
\icmlauthor{Anany Kotawala}{prin}
\end{icmlauthorlist}
\icmlaffiliation{prin}{Princeton University, Princeton, NJ, USA}
\icmlcorrespondingauthor{Anany Kotawala}{akotawala@princeton.edu}
\icmlkeywords{hypothesis testing, paired samples, statistical resolution, minimum detectable effect, McNemar, paired bootstrap, LLM evaluation, multiplicity, leaderboards}
\vskip 0.3in
]

\printAffiliationsAndNotice{}

\hypersetup{
  pdftitle={Resolution Diagnostics for Paired LLM Evaluation},
  pdfsubject={ICML 2026 Workshop on Hypothesis Testing},
  pdfkeywords={hypothesis testing, paired samples, minimum detectable effect, McNemar, paired bootstrap, LLM evaluation, leaderboards}
}

\raggedbottom

\setlength{\textfloatsep}{0pt plus 1pt minus 0pt}
\setlength{\floatsep}{0pt plus 1pt minus 0pt}
\setlength{\intextsep}{0pt plus 1pt minus 0pt}
\setlength{\dbltextfloatsep}{0pt plus 1pt minus 0pt}
\setlength{\dblfloatsep}{0pt plus 1pt minus 0pt}
\setlength{\abovecaptionskip}{2pt}
\setlength{\belowcaptionskip}{2pt}

\begin{abstract}
Across two public LLM leaderboards, many displayed pairwise rankings do not meet a conventional paired-test resolution target under the actual paired evaluation design: $11$ of $40$ Open LLM Leaderboard v1 pairwise comparisons and $4$ of $9$ MMLU-Pro top-$10$ adjacent-rank pairs are unresolved at $(\alpha,1{-}\beta){=}(0.05,0.8)$. The MMLU-Pro count rises to $6/9$ under real subject-level clustering and stays at $5$--$6$ out of $9$ in $99.9\%$ of category-bootstrap resamples. We frame paired LLM evaluation as a hypothesis-testing problem, invert level-$\alpha$, power-$(1{-}\beta)$ tests, and report a per-pair resolution ratio $q{=}N/N^{\star}$ as the primary diagnostic. A sharp small-effect expansion with an explicit second-order constant shows that the widely-used unpaired Cohen-$h$-plus-$(1{-}\rho)$ shortcut deviates from the correct $N^{\star}$ by approximately a factor of two in the close-comparison regime, a deficit that three of five off-the-shelf calculators (Cohen 1988, G*Power, R \texttt{pwr}) silently inherit when the user post-multiplies their per-arm output by $(1{-}\rho)$. The unresolved-pair pattern remains under multiplicity correction and anytime-valid sequential testing.
\end{abstract}

\section{Introduction}
\label{sec:intro}

Modern LLM leaderboards rank models by percentage-point gaps on shared-prompt benchmarks. A leaderboard saying ``model $A$ scores $78.3\%$, model $B$ scores $77.5\%$'' converts a $0.8$-point gap into headlines and product decisions; but the assertion that $A$ is meaningfully better than $B$ is a statistical claim about the gap, not the gap itself. On the four-task Open LLM Leaderboard v1, $11$ of $40$ displayed pairwise rankings do not meet the paired-test resolution target at $(\alpha,1{-}\beta){=}(0.05,0.8)$; on the MMLU-Pro top-$10$, $4$ of $9$ adjacent-rank pairs remain unresolved at the actual benchmark size $N{=}12{,}032$. Concretely, the displayed gap between gemma-7B and Llama-3-8B on HellaSwag is $\hat\delta{=}{+}0.46$~pp at $n{=}10{,}042$: significant by asymptotic $\chi^2_1$ ($p{=}0.049$), not by the exact conditional binomial ($p{=}0.054$), with a paired-bootstrap $95\%$ CI on $\bar D$ containing zero. This is the kind of claim a resolution diagnostic surfaces. Whether a leaderboard has the resolution to support these claims depends on its size, on the within-prompt structure of the data, and on how many other adjacent gaps share the table.

The natural inference is a paired hypothesis test: for binary accuracy, McNemar's test on the discordant pairs with the corresponding required-$N$ \citep{mcnemar1947note,connor1987sample}; for graded scores, a paired-$t$ or paired bootstrap. The machinery is classical. What is missing is a \emph{resolution-reporting protocol} that says, given the size and structure of a benchmark, what gaps can be distinguished from sampling noise at conventional Type-I and Type-II error. Power calculators in common use focus on unpaired comparisons \citep{miller2024errorbars}, and there is no standard for what to publish alongside a headline gap.

We treat shared-prompt LLM benchmarks as paired hypothesis-testing problems and derive a resolution-reporting framework by inverting level-$\alpha$, power-$(1{-}\beta)$ tests. The framework yields three quantities: the minimum detectable effect (MDE) at the current $N$; the required paired sample size $N^{\star}$ at a target effect; and the resolution ratio $q=N/N^{\star}$.

The methodological building blocks are classical: Wald inversion, McNemar-Connor required-$N$, Bonferroni/Holm multiplicity, design-effect cluster correction, and anytime-valid e-processes. This paper contributes three things on top. First, a sharp characterisation of a specific misuse pattern (one new lemma with an explicit constant). Second, an empirical demonstration that the resulting diagnostic flags several displayed rankings as unresolved at the target $(\alpha, 1{-}\beta)$ resolution level. Third, a packaged reporting protocol that exposes each diagnostic entry as a one-line call. Concretely:

\noindent\emph{New theory.}
\begin{itemize}[leftmargin=*, itemsep=0pt, topsep=2pt]
\item A \textbf{sharp small-effect expansion} (\cref{lem:shortcut}) for the unpaired Cohen-$h$-plus-$(1{-}\rho)$ shortcut: the ratio $n_h/N^{\star}$ deviates from $\tfrac12$ by at most $C(p,\rho)\delta^2 + O(\delta^4)$ with an explicit second-order constant $C(p,\rho)$ that lets practitioners compute the admissible $\delta^{\star}$ at which the shortcut is reliable (\cref{cor:underest}). The leading-order factor of two is not new; the explicit constant and the uniform convergence on compact admissible sets are.
\end{itemize}

\noindent\emph{New empirical findings.}
\begin{itemize}[leftmargin=*, itemsep=0pt, topsep=2pt]
\item A \textbf{calculator-misuse characterisation} across five power-analysis tools at the worked example $(p_A,p_B,\rho){=}(0.65,0.60,0.30)$: three of five (Cohen's 1988 textbook formula, G*Power~3.1, R \texttt{pwr}) silently underestimate required-$N$ by a factor of two when their per-arm output is post-multiplied by $(1{-}\rho)$ (\cref{tab:misuse}). The misuse pattern is easily reproduced on every widely-used calculator we tried; whether it occurs in published LLM-evaluation work is a separate empirical question that we do not claim to settle here.
\item A \textbf{real-leaderboard reanalysis with prospective validation}: paired-McNemar required-$N$ is median $2.15{\times}$ smaller than the unpaired Gaussian formula of \citet{miller2024errorbars} on the same data (IQR $[1.60,2.75]$; \cref{fig:miller}), and on three real OLL v1 pairs the framework's $N^{\star}$ prescription delivers empirical McNemar power $0.80\pm 0.03$ over $M{=}1000$ bootstrap trials, with sub-prescription ($0.8\,N^{\star}$) and super-prescription ($1.2\,N^{\star}$) correctly under- and over-shooting (\cref{tab:prospective}). The diagnostic is calibrated on data, not just in asymptotic theory.
\item \textbf{Real subject-level cluster sensitivity}: using MMLU-Pro's $14$ subject categories as natural clusters, the unresolved adjacent-pair count rises from $4/9$ at IID to $6/9$ (\cref{tab:cluster}); two pairs flip from comfortably resolved to $N^{\star} > 3N$. A category-bootstrap CI on the cluster-corrected unresolved count and LOSO sensitivity (\cref{app:loso}) both support the flip.
\end{itemize}

\noindent\emph{Methodological integration.}
\begin{itemize}[leftmargin=*, itemsep=0pt, topsep=2pt]
\item Multiplicity, multi-arbiter, and anytime-valid sequential stress tests applied jointly to a single leaderboard family (\S\ref{sec:resolution}--\S\ref{sec:anytime}). Each component is classical; the joint pipeline and per-pair verdict table (\cref{tab:verdict}) are, to our knowledge, the first end-to-end resolution characterisation of a public LLM leaderboard.
\item A pip-installable package, \texttt{llm-power}, exposing the diagnostic for benchmark designers and leaderboard maintainers.
\end{itemize}

\noindent The empirical verdicts in \S\ref{sec:resolution} use \cref{eq:reqn_exact} directly and do not depend on the explicit constant in \cref{lem:shortcut}; the lemma's role is to sharply quantify the well-known factor-of-two pattern in a closed form practitioners can use to pre-screen their tools.

\paragraph{Paper roadmap.} \S\ref{sec:framework} defines the inversion and the resolution ratio $q$. \S\ref{sec:binary} instantiates for paired-binary accuracy and proves the shortcut lemma; \S\ref{sec:misuse} traces it through five real calculators. \S\ref{sec:calib} reports finite-sample calibration. \S\ref{sec:resolution} applies the diagnostic to OLL v1 and MMLU-Pro; \S\ref{sec:multiplicity}--\S\ref{sec:anytime} stress-test the verdict under multiplicity, real subject clustering, and anytime-valid sequential testing. The headline empirical finding is \cref{tab:verdict}; the headline theoretical finding is \cref{lem:shortcut}.

\section{Related work}
\label{sec:related}

\paragraph{Paired-binary tests.} McNemar's test \citep{mcnemar1947note} conditions on the discordant-pair count $b{+}c$. The large-sample required-$N$ formula is \citet{connor1987sample}; mid-$p$ \citep{liddell1983simplified} and continuity-corrected variants \citep{agresti2005simple} are the standard small-$b{+}c$ upgrades.

\paragraph{Power for NLP/LLM evaluation.} \citet{card2020little} argued NLP comparisons are routinely underpowered; \citet{dror2018hitchhiker} survey appropriate tests, including McNemar for paired binary outcomes. Methodologically closest is \citet{miller2024errorbars}, who gave a closed-form required-$N$ for the \emph{unpaired} Gaussian-accuracy case. The unpaired formula is appropriate for independent samples; our comparison quantifies the efficiency loss when the evaluation design is actually paired. On the same $40$ OLL v1 pairs we study (\S\ref{sec:resolution-v1}), the paired McNemar required-$N$ is a median $2.15\times$ smaller than Miller's unpaired formula at the empirical $\hat\rho$ (IQR $[1.60, 2.75]$, range $[1.33, 5.39]$); the empirical efficiency gain matches the textbook prediction $1/(1{-}\rho)$ for paired vs.\ unpaired Wald tests (\cref{fig:miller}, mean residual $-0.009$, max $0.062$).

\begin{figure}[t]
\centering
\includegraphics[width=\columnwidth]{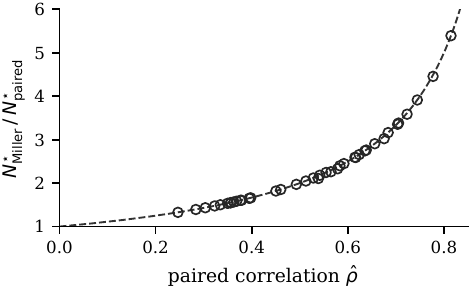}
\caption{Efficiency gain of paired McNemar over the unpaired Gaussian formula of \citet{miller2024errorbars} across the $40$ OLL v1 pairs. Open circles are empirical $N^{\star}_{\text{Miller}}/N^{\star}_{\text{paired}}$ at the pair's $\hat\rho$; dashed curve is the textbook prediction $1/(1{-}\rho)$ in the equal-marginal limit. Median empirical gain is $2.15\times$.}
\label{fig:miller}
\end{figure} \citet{madaan2024variance} measure benchmark variance across $13$ tasks. \citet{jo2025whatdoes} give a clustered bootstrap for ability-estimation precision; that is an estimator-variance statement, distinct from test power. \citet{polo2024tinybenchmarks} subsample benchmarks for efficient point-estimate accuracy, also distinct from the required-$N$ for paired hypothesis tests we address. Construct-validity critiques \citep{freiesleben2025epistemology,bean2025matters,alaa2025medical} are orthogonal: a benchmark can have high construct validity and still be too small for the test at hand.

To our knowledge, prior work treats these elements separately. The present paper integrates paired-difference variance with test inversion, leaderboard-scale multiplicity, and a per-pair resolution diagnostic on a real LLM benchmark, and packages the result as a reusable tool.

\section{Methodology}
\label{sec:framework}

\paragraph{Paired setup.} Two models $A, B$ are evaluated on the same $N$ prompts, which we treat as an i.i.d.\ sample from a target prompt superpopulation; without this, the gap on exactly these items is a fixed quantity and no hypothesis test is needed. Per-prompt scores $X_i^A, X_i^B$ may be binary, graded in $[0,1]$, or real-valued. Define the paired difference $D_i = X_i^A - X_i^B$ and the paired-mean estimator $\hat\delta = N^{-1}\sum_i D_i$.

\paragraph{Test and inversion.} Under a normal approximation,
\begin{equation}
\label{eq:wald}
T_N = \hat\delta/\widehat{\mathrm{SE}}(\hat\delta), \quad \widehat{\mathrm{SE}}(\hat\delta) = \sigma_D/\sqrt{N},
\end{equation}
where $\sigma_D^2 = \mathrm{Var}(D_i)$. For two-sided level-$\alpha$ tests, $H_0: \mathbb{E}[D_i] = 0$ is rejected when $|T_N| \geq z_{1-\alpha/2}$. Under a fixed alternative $\mathbb{E}[D_i] = \delta$,
\begin{equation}
\label{eq:power}
1-\beta = \Phi\!\left(\frac{|\delta|}{\sigma_D}\sqrt{N} - z_{1-\alpha/2}\right) + O(N^{-1/2}),
\end{equation}
where the $O(N^{-1/2})$ term is the standard finite-sample remainder of the normal approximation \citep{hall1992bootstrap}; \cref{sec:calib} characterises it empirically.\footnote{The exact two-sided power is $\pi_N(\delta) = \Phi(-z_{1-\alpha/2} - \mu) + 1 - \Phi(z_{1-\alpha/2} - \mu)$ with $\mu = |\delta|\sqrt{N}/\sigma_D$. The second term dominates at conventional target powers (the first contributes $<10^{-3}$ at $1{-}\beta = 0.8$); \cref{eq:nstar} inverts the dominant term.} Inverting at target power gives
\begin{align}
N^{\star}(\delta;\alpha,\beta) &= \!\left(\!\frac{(z_{1-\alpha/2}{+}z_{1-\beta})\,\sigma_D}{|\delta|}\!\right)^{\!2}\!, \label{eq:nstar}\\
\delta_{\mathrm{MDE}}(N;\alpha,\beta) &= \frac{(z_{1-\alpha/2}{+}z_{1-\beta})\,\sigma_D}{\sqrt{N}}. \label{eq:mde}
\end{align}
\begin{definition}[Resolution ratio]
\label{def:resratio}
For a paired leaderboard pair with observed gap $\hat\delta$ on $N$ shared prompts, the \emph{resolution ratio} is
$q \,:=\, N/N^{\star}(\hat\delta)$. The displayed gap is \emph{statistically resolvable} at $(\alpha,1{-}\beta)$ if $q \geq 1$.
\end{definition}
\noindent For a single pair, $q$ is a deterministic monotone transform of the squared Wald statistic: $q = T_N^2/(z_{1-\alpha/2}+z_{1-\beta})^2$, so $q\ge 1 \Leftrightarrow |T_N|\ge z_{1-\alpha/2}+z_{1-\beta} \approx 2.80$. Per-pair $q$ therefore carries no information beyond the (Wald-statistic) $p$-value; its value-add over a $p$-value is interpretive, plus the aggregation layer (multiplicity, clustering, anytime-validity) where the $q$ scale composes more naturally than $p$-values do.

\noindent $q < 1$ does not assert equality of the two models, nor does it overturn a fixed-$N$ $p$-value: it says the benchmark of this size does not have the target resolution for a gap of the displayed magnitude under the stated $(\alpha,1{-}\beta)$ operating point. The HellaSwag boundary pair in \S\ref{sec:intro} illustrates the distinction: it rejects asymptotically at $p{=}0.049$ yet has $q\approx \tfrac12$, so a nominally significant gap sits only halfway to the $(0.05, 0.8)$ resolution target. We use $q$ as the paper's load-bearing reporting quantity throughout.

\paragraph{Prospective vs.\ diagnostic.} \cref{eq:nstar} has two uses. Plugging in a \emph{pre-specified} target $\delta$ gives a prospective sample-size requirement for benchmark designers. Plugging in the \emph{observed} $\hat\delta$ gives a resolution diagnostic. We do \emph{not} compute power at the observed effect to argue that a particular ranking is true or false (the misuse criticised by \citet{hoenig2001abuse}); instead we compute $N^{\star}(\hat\delta)$ as a benchmark-design diagnostic: the required $N$ to detect a gap of the observed magnitude, irrespective of the significance verdict on the current sample.

\paragraph{Paired variance and multiplicity.} On shared prompts, $\sigma_D^2 = \mathrm{Var}(X^A) + \mathrm{Var}(X^B) - 2\,\mathrm{Cov}(X^A,X^B)$. State-of-the-art LLMs solve overlapping subsets of items, so $\mathrm{Cov}(X^A,X^B)$ is large and independent-proportion variances inflate MDE and $N^{\star}$. A $K$-model leaderboard simultaneously displays up to $\binom{K}{2}$ pairwise tests; replacing $\alpha$ with a Bonferroni-, Holm- \citep{holm1979simple} or BH-adjusted $\alpha'$ \citep{benjamini1995controlling} in \cref{eq:nstar} directly inflates $N^{\star}$. Family-level resolution is the relevant object whenever a leaderboard summarises many pairs at once.

\paragraph{Sequential / anytime-valid extension.} Public leaderboards update continuously, so the relevant $N$ is a stopping time chosen post-hoc. The same $\sigma_D^2$ plugs into a confidence sequence \citep{howard2021time,ramdas2023game} or a paired-Bernoulli mixture e-process, replacing $z_{1-\alpha/2}$ with a time-uniform boundary $u(n)$ satisfying $\Pr(\sup_n |T_n| \geq u(n)) \leq \alpha$ under $H_0$; \cref{sec:anytime} reports an ${\approx}2{\times}$ threshold inflation and one additional unresolved MMLU-Pro pair.

\section{Paired-binary instantiation and the shortcut lemma}
\label{sec:binary}

For binary accuracy the paired-difference variance is
\begin{equation}
\label{eq:vardiff}
\sigma_D^2 = p_A q_A + p_B q_B - 2\rho\sqrt{p_A q_A\,p_B q_B}, \;\, q_\cdot{=}1{-}p_\cdot,
\end{equation}
with $\rho$ the within-pair Bernoulli correlation. Substitution into \cref{eq:nstar} yields the McNemar-Connor required paired count
\begin{equation}
\label{eq:reqn_exact}
N^{\star} = \frac{(z_{1-\alpha/2}+z_{1-\beta})^2\,\sigma_D^2}{(p_A-p_B)^2},
\end{equation}
which agrees asymptotically with the $\chi^2_1$ McNemar test on the discordant counts $(b,c)\sim\mathrm{Binomial}(b{+}c,\tfrac12)$ under $H_0$ \citep{mcnemar1947note,connor1987sample}.

\paragraph{Admissible correlations.} Not every $\rho$ is achievable for given marginals $(p_A,p_B)$: the Hoeffding bound on Bernoulli correlation pins $\rho$ to an interval $[\rho_{\min},\rho_{\max}]$ with
\begin{align}
\rho_{\max} &= \sqrt{\tfrac{\min\{p_A(1{-}p_B),\,(1{-}p_A)p_B\}}{\max\{p_A(1{-}p_B),\,(1{-}p_A)p_B\}}},\\
\rho_{\min} &= -\sqrt{\tfrac{\min\{p_A p_B,\,(1{-}p_A)(1{-}p_B)\}}{\max\{p_A p_B,\,(1{-}p_A)(1{-}p_B)\}}}.
\end{align}
\cref{eq:vardiff} requires $\rho$ to lie in this admissible interval; we restrict all numerical claims accordingly.

\paragraph{The unpaired-to-paired shortcut.} A natural temptation, when only an unpaired Cohen-$h$ calculator is on hand, is to read off the per-arm $n_{\mathrm{unp}} = (z_{1-\alpha/2}+z_{1-\beta})^2/h^2$ \citep[Eq.~6.3.2]{cohen1988power} with $h = 2\arcsin\sqrt{p_A} - 2\arcsin\sqrt{p_B}$, then apply Cohen's generic $(1{-}\rho)$ paired adjustment \citep[Ch.~2]{cohen1988power}:
\begin{equation}
\label{eq:reqn_h}
n_h = (1-\rho)\,(z_{1-\alpha/2}+z_{1-\beta})^2/h^2.
\end{equation}
This is a natural shortcut when only an unpaired Cohen-$h$ calculator is to hand (see the comparison of \S\ref{sec:misuse}). \Cref{lem:shortcut} is a sharp small-effect characterisation of this failure mode: not a foundational theorem about paired tests, but a tight second-order expansion for a misuse that these tools make easy. The shortcut deviates from $N^{\star}$ by approximately a factor of two in the close-comparison limit; the deviation of $n_h/N^{\star}$ from $\tfrac12$ is $O(\delta^2)$, not $O(\delta)$, and the $\rho$-dependent term in $C$ vanishes at $p=\tfrac12$ (relevant since many benchmark accuracies cluster near $\tfrac12$). The technical contribution is the explicit constant $C(p,\rho)$ in \cref{eq:Cprho}.

\begin{lemma}[Sharp small-effect bound]
\label{lem:shortcut}
Fix $p \in (0,1)$ and $\delta$ small enough that $(p{+}\delta/2,\,p{-}\delta/2) \in (0,1)^2$, and $\rho$ in the Hoeffding-admissible region for this pair, with $\rho$ bounded away from $1$. Then for every sufficiently small $|\delta|$,
\begin{equation}
\label{eq:lem}
\left|\frac{n_h(p{+}\tfrac\delta2,\,p{-}\tfrac\delta2,\,\rho)}{N^{\star}(p{+}\tfrac\delta2,\,p{-}\tfrac\delta2,\,\rho)} - \frac{1}{2}\right| \;\leq\; C(p,\rho)\,\delta^2 \,+\, O(\delta^4),
\end{equation}
with explicit constant
\begin{equation}
\label{eq:Cprho}
C(p,\rho) = \frac{1}{2}\left|\frac{(1{+}\rho)(1{-}2p)^2}{16(1{-}\rho)\,p^2(1{-}p)^2} - \frac{1}{6\,p(1{-}p)}\right|,
\end{equation}
and convergence uniform on compact subsets of the admissible region. In particular $\lim_{\delta\to 0}\, n_h/N^{\star} = \tfrac12$, and at $p=\tfrac12$ the $(1{-}2p)^2$ term vanishes a fortiori, giving $C(\tfrac12, \rho) = 1/3$ independent of $\rho$.
\end{lemma}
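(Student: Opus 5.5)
The plan is a direct Taylor expansion of the ratio in $\delta$, using the symmetric parametrisation $p_A=p+\delta/2$, $p_B=p-\delta/2$ and the notation $v:=p(1-p)$. From \cref{eq:reqn_h} and \cref{eq:reqn_exact}, the factor $(z_{1-\alpha/2}+z_{1-\beta})^2$ cancels and $p_A-p_B=\delta$, so
\[
R(\delta):=\frac{n_h}{N^{\star}}=\frac{(1-\rho)\,\delta^2}{h^2\,\sigma_D^2}.
\]
Swapping $\delta\mapsto-\delta$ exchanges the two arms. This leaves $h^2$ unchanged and leaves $\sigma_D^2$ in \cref{eq:vardiff} unchanged. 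Hence $R$ is an even function of $\delta$, so every odd-order term vanishes automatically. This evenness is what makes the remainder $O(\delta^4)$ rather than $O(\delta^3)$.

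\emph{Step 1: expand $h^2$.} Write $f(x)=2\arcsin\sqrt{x}$, so that $f'(x)=1/\sqrt{x(1-x)}$ and $h=f(p+\delta/2)-f(p-\delta/2)$. Since $h$ is odd in $\delta$,
\[
h=\delta f'(p)+\frac{\delta^3}{24}f'''(p)+O(\delta^5).
\]
Squaring and using $f'(p)^2=1/v$ gives
\[
h^2=\frac{\delta^2}{v}\Bigl[1+\frac{\delta^2}{12}\frac{f'''(p)}{f'(p)}+O(\delta^4)\Bigr].
\]
The ratio $f'''/f'$ will be written out explicitly in terms of $p$; it is a rational function of $p$ and $v$.

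\emph{Step 2: expand $\sigma_D^2$.} For the sum of variances, $v(p\pm\delta/2)=v\pm\tfrac{\delta}{2}(1-2p)-\tfrac{\delta^2}{4}$, so $p_Aq_A+p_Bq_B=2v-\delta^2/2$ exactly. For the cross term, the product is
\[
p_Aq_Ap_Bq_B=(v-\delta^2/4)^2-\delta^2(1-2p)^2/4,
\]
and taking the square root gives
\[
\sqrt{p_Aq_Ap_Bq_B}=v-\frac{\delta^2}{4}-\frac{\delta^2(1-2p)^2}{8v}+O(\delta^4).
\]
Combining the two pieces,
\[
\sigma_D^2=2v(1-\rho)\Bigl[1-\frac{\delta^2}{4v}+\frac{\rho\,\delta^2(1-2p)^2}{8(1-\rho)v^2}+O(\delta^4)\Bigr].
\]

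\emph{Step 3: assemble the ratio.} Substituting Steps 1 and 2,
\[
R(\delta)=\frac12\Bigl[1+\delta^2 K(p,\rho)+O(\delta^4)\Bigr]^{-1},
\qquad
K=\frac{f'''}{12f'}-\frac{1}{4v}+\frac{\rho(1-2p)^2}{8(1-\rho)v^2}.
\]
Hence $|R-\tfrac12|=\tfrac12|K|\,\delta^2+O(\delta^4)$. This gives the limit $\lim_{\delta\to0}R=\tfrac12$ immediately. At $p=\tfrac12$ the $(1-2p)^2$ term drops out, so $K$, and therefore the constant, is independent of $\rho$ there.

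\emph{Step 4: uniformity.} The map $(p,\rho,\delta)\mapsto R$ is real-analytic wherever $v>0$, $p\pm\delta/2\in(0,1)$ and $\rho<1$. This requires checking that $\sigma_D^2$ stays positive; it does, because its leading term is $2v(1-\rho)$ and this is bounded below on compacts. Taylor's theorem with the remainder in integral form then gives a remainder bounded by $M\delta^4$, with $M$ uniform on any compact subset of the admissible region and $\rho$ bounded away from $1$. That yields the uniform-convergence clause of the statement.

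\emph{Main obstacle.} The delicate step is reconciling $\tfrac12|K|$ with the stated closed form \cref{eq:Cprho}. Two things need care here. First, the $f'''/f'$ term from the arcsine expansion must be simplified and combined with the $-1/(4v)$ term into a single $1/(6v)$-type piece plus a $(1-2p)^2/v^2$ piece. Second, the $\rho$-dependent coefficient of $(1-2p)^2/v^2$, with denominator $(1-\rho)$, must be collected alongside that $(1-2p)^2/v^2$ piece. I would carry out this simplification symbolically, cross-checking at $p=\tfrac12$ and at one generic point numerically against the exact $R(\delta)$. If my coefficient differs from \cref{eq:Cprho} (for instance $\rho$ versus $(1+\rho)/2$ multiplying $(1-2p)^2$), then the bound \cref{eq:lem} still holds provided the stated $C$ dominates $\tfrac12|K|$ on the admissible region, and I would verify that inequality directly. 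The claimed value $C(\tfrac12,\rho)=1/3$ should fall out as a check on the constant term.
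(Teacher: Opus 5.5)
Your proposal is correct and follows the paper's proof step for step. It uses the same symmetric expansions of $h^2$ and $\sigma_D^2$ about the midpoint and the same factorisation $n_h/N^{\star} = \tfrac12[(1+\delta^2H_2)(1+\delta^2V_2)]^{-1}$, and your $K$ is exactly the paper's $H_2+V_2$; your arm-swap evenness argument is a slightly cleaner justification of the $O(\delta^4)$ remainder than the paper's parity remark on each factor. The simplification you defer goes through with no fallback needed: $f'''/f' = 1/v + 3(1-2p)^2/(4v^2)$, so the arcsine term contributes $1/(12v) + (1-2p)^2/(16v^2)$, which combines with $-1/(4v) + \rho(1-2p)^2/(8(1-\rho)v^2)$ to give $K = (1+\rho)(1-2p)^2/(16(1-\rho)v^2) - 1/(6v)$, i.e.\ $\tfrac12|K| = C(p,\rho)$ exactly, with $K=-2/3$ and hence $C=1/3$ at $p=\tfrac12$.
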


\begin{corollary}[Underestimation in the close-comparison regime]
\label{cor:underest}
For any $\epsilon \in (0,\tfrac12)$ there exists $\delta^{\star}(p,\rho,\epsilon)$ such that whenever $|\delta| \le \delta^{\star}$, $n_h \le (\tfrac12 + \epsilon)\,N^{\star}$, i.e.\ the shortcut underestimates $N^{\star}$ by at least $(\tfrac12 - \epsilon)\,N^{\star}$. Concretely $\delta^{\star}(p,\rho,\epsilon) = \sqrt{\epsilon/C(p,\rho)}$ to leading order; e.g.\ $\delta^{\star}(0.65,0.3,0.05) \approx 0.43$.
\end{corollary}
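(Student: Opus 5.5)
The approach is a direct Taylor expansion in $\delta$ of the ratio
\[
\frac{n_h}{N^{\star}} \;=\; \frac{(1-\rho)\,\delta^2}{h^2\,\sigma_D^2},
\]
obtained from \cref{eq:reqn_h} and \cref{eq:reqn_exact}. The common factor $(z_{1-\alpha/2}+z_{1-\beta})^2$ cancels, and $p_A-p_B=\delta$. Write $p_A=p+\delta/2$ and $p_B=p-\delta/2$. Swapping $\delta\mapsto-\delta$ exchanges $p_A$ and $p_B$. This leaves $\sigma_D^2$ unchanged, flips the sign of $h$, and so leaves $h^2$ unchanged. Hence the ratio is an even function of $\delta$. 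Once the limit and the $\delta^2$ coefficient are identified, the remainder is automatically $O(\delta^4)$ rather than $O(\delta^3)$.

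\emph{Step 1 (the effect size $h$).} Set $f(x)=2\arcsin\sqrt{x}$, so $f'(x)=(x(1-x))^{-1/2}$. The Taylor expansion of $h=f(p+\delta/2)-f(p-\delta/2)$ contains only odd powers:
\[
h=f'(p)\,\delta+\tfrac{1}{24}f'''(p)\,\delta^3+O(\delta^5).
\]
Therefore
\[
h^2=\frac{\delta^2}{p(1-p)}\Bigl(1+\tfrac{1}{12}\,\tfrac{f'''(p)}{f'(p)}\,\delta^2+O(\delta^4)\Bigr).
\]
I will compute $f'''/f'$ explicitly as a rational function of $p$ and $p(1-p)$. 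This factor produces the $\rho$-free part $-\tfrac{1}{6p(1-p)}$ of \cref{eq:Cprho}.

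\emph{Step 2 (the paired variance $\sigma_D^2$).} Write $q=1-p$. Expanding \cref{eq:vardiff} exactly,
\[
p_Aq_A+p_Bq_B=2pq-\tfrac{\delta^2}{2},\qquad
p_Aq_A\,p_Bq_B=\bigl(pq-\tfrac{\delta^2}{4}\bigr)^2-\tfrac{(1-2p)^2\delta^2}{4}.
\]
Taking the square root to second order gives $pq-\tfrac{\delta^2}{4}-\tfrac{(1-2p)^2\delta^2}{8pq}+O(\delta^4)$. Hence
\[
\sigma_D^2=2(1-\rho)\,pq\Bigl(1-\tfrac{\delta^2}{4pq}+\tfrac{\rho(1-2p)^2\,\delta^2}{8(1-\rho)\,p^2q^2}+O(\delta^4)\Bigr).
\]
The requirement that $\rho$ be bounded away from $1$ is exactly what keeps this leading term nonzero and the bracketed coefficient finite.

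\emph{Step 3 (combine).} Multiplying the two brackets gives
\[
\frac{n_h}{N^{\star}}=\tfrac12\bigl(1+A(p,\rho)\,\delta^2\bigr)^{-1}+O(\delta^4)=\tfrac12-\tfrac12A(p,\rho)\,\delta^2+O(\delta^4),
\]
where $A$ is the sum of the Step-1 and Step-2 $\delta^2$ coefficients. Taking absolute values yields \cref{eq:lem} with $C=\tfrac12|A|$. I then simplify $A$ and check it against \cref{eq:Cprho}: the $(1-2p)^2$ term should carry the factor $(1+\rho)/(1-\rho)$ once the $\rho$-free pieces are collected. At $p=\tfrac12$ only the $\rho$-free part survives. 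The limit $\tfrac12$ is immediate from the leading terms.

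\emph{Step 4 (uniformity).} On a compact subset $K$ of $\{(p,\rho): 0<p<1,\ \rho<1\}$ that stays inside the Hoeffding-admissible region for all sufficiently small $|\delta|$, the coefficients and the Lagrange remainders of $f$ and of $\sqrt{\cdot}$ are continuous, hence bounded. The $O(\delta^4)$ constant is therefore uniform on $K$, and this gives the uniform convergence.

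The corollary then follows by solving $C\delta^2\le\epsilon$, which gives $\delta^{\star}=\sqrt{\epsilon/C(p,\rho)}$ to leading order. For $\delta$ small enough that the $O(\delta^4)$ term is absorbed into $\epsilon$, this yields $n_h\le(\tfrac12+\epsilon)N^{\star}$.

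I expect the main obstacle to be the Step-3 bookkeeping. Merging the arcsine coefficient with the variance coefficient must reproduce exactly the combination in \cref{eq:Cprho}. If it does not, the inequality \cref{eq:lem} still holds with $C=\tfrac12|A|$ as actually computed, and I would report that constant. A secondary subtlety is keeping $\rho$ admissible as $\delta$ varies; restricting to the interior compact set $K$ handles this.
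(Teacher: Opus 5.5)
Your proposal is correct and follows the paper's own route: the appendix proves \cref{lem:shortcut} by the same symmetric Taylor expansion of $h^2$ and $\sigma_D^2$ (your two $\delta^2$ coefficients are exactly its $H_2$ and $V_2$), and the corollary then comes from inverting $C(p,\rho)\delta^2\le\epsilon$, as you do, with $C(0.65,0.3)\approx 0.265$ giving the quoted $\delta^\star\approx 0.43$. One small slip in Step~1: the arcsine factor alone contributes $+\tfrac{1}{12p(1-p)}$, not $-\tfrac{1}{6p(1-p)}$; the $\rho$-free $-\tfrac{1}{6p(1-p)}$ in \cref{eq:Cprho} appears only after adding the $-\tfrac{1}{4p(1-p)}$ from your variance bracket, which your Step~3 sum already does.
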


\noindent \emph{Interpretation.} The shortcut $n_h$ is approximately one half of $N^{\star}$ at small $\delta$, with the deviation above or below $\tfrac12$ depending on $(p,\rho)$. The leading-order factor of two follows from $\mathrm{Var}(X^A-X^B)$ algebra and is not new; the explicit constant and the uniform convergence are what \cref{cor:underest} relies on to deliver a usable admissible $\delta^\star$.

\paragraph{When the shortcut matters operationally.} In the close-comparison regime, the shortcut produces approximately $N^\star/2$ with a deviation that is $O(\delta^2)$. What changes with $|\delta|$ is the operational consequence. A verdict flips (resolved vs.\ unresolved at benchmark size $N$) only when $N^{\star}$ and $n_h$ straddle $N$. For cross-tier comparisons where $\max(N^{\star}, n_h) \ll N$, both return ``resolved'' and the underestimate is benign. The verdict-changing regime is precisely the close-comparison regime that dominates leaderboard adjacencies: $17/40$ OLL v1 pairs have $|\hat\delta|\leq 5$~pp (\cref{tab:bucket}) and all $9$ MMLU-Pro top-$10$ adjacent pairs have $|\hat\delta|\leq 7$~pp (\cref{tab:discord_mmlupro}). The shortcut is material exactly where adjacent-rank claims rest, and benign where cross-tier comparisons would resolve either way.

\noindent \emph{Proof sketch (full proof in \cref{app:proof}).} The strategy is to Taylor-expand both $h^2$ and $\sigma_D^2$ around the midpoint $p$, exploiting the symmetry of $\arcsin\sqrt{\cdot}$ to kill the linear-in-$\delta$ terms. With $u = p(1{-}p)$, the result is $h^2 = (\delta^2/u)\cdot[1 + \delta^2\cdot H_2 + O(\delta^4)]$ where $H_2 = 1/(12u) + (1{-}2p)^2/(16u^2)$, and $\sigma_D^2 = 2u(1{-}\rho) + (\delta^2/4)\cdot[\rho(1{-}2p)^2/u - 2(1{-}\rho)] + O(\delta^4)$. Substituting and combining the two $O(\delta^2)$ corrections,
\(
n_h/N^{\star} = \tfrac12 - \tfrac{\delta^2}{2}\bigl[(1{+}\rho)(1{-}2p)^2/(16(1{-}\rho)u^2) - 1/(6u)\bigr] + O(\delta^4),
\)
from which \cref{eq:Cprho} follows. \qed

\paragraph{Numerical check.} Per-cell verification on a $(p,\rho,\delta)$ grid (\texttt{e6\_ratio\_heatmap.csv}; full details in \cref{app:proof}) matches $C(p,\rho)$ to four significant figures at small $\delta$. At $\delta{=}0.05$, $|n_h/N^{\star} - \tfrac12| \leq 0.0008$; at $\delta{=}0.20$, $\leq 0.014$. On the $40$ OLL v1 pairs (\S\ref{sec:resolution-v1}), the empirical ratio $n_h/N^{\star}$ has median $0.5002$, IQR $[0.4999, 0.5035]$, range $[0.487, 0.562]$; the $7$ close-comparison pairs ($|\hat\delta|\leq 2\%$) hit $\tfrac12$ to four decimals.

The structural cause is that $(1{-}\rho)$ is correctly applied to the single-arm variance $p(1{-}p)$, but the paired-difference variance \cref{eq:vardiff} carries an additional factor of two from $\mathrm{Var}(X^A){+}\mathrm{Var}(X^B)$. The fix is to use the paired-difference variance directly. We show next that this misuse pattern occurs in widely-used power tools.

\subsection{Calculator-misuse comparison}
\label{sec:misuse}

\begin{table}[t]
\centering\footnotesize
\caption{Calculator-misuse comparison at $(p_A,p_B,\rho)=(0.65,0.60,0.30)$, $(\alpha,1{-}\beta)=(0.05,0.8)$. ``Per-arm'' is the formula or calculator's native return value; ``$\times(1{-}\rho)$'' is the naive paired-sample-size readout. The correct $N^{\star}=1{,}028$ from \cref{eq:reqn_exact}. Three of the five formulas/tools return a per-arm $K/h^2$ that, when multiplied by $(1{-}\rho)$, gives the shortcut $n_h$: half the correct $N^{\star}$. \texttt{statsmodels} and \texttt{llm\_power} return values that do not suffer this misuse pattern. Tool versions and invocation commands in artifact under \texttt{appendix\_table1/}.}
\label{tab:misuse}
\setlength{\tabcolsep}{2pt}
\scriptsize
\begin{tabular}{@{}lcrr@{}}
\toprule
Calculator & convention & per-arm & $\times(1{-}\rho)$ \\
\midrule
Cohen 1988, Eq.~6.3.2                & $K/h^2$    & $736$    & $515$ \\
G*Power 3.1 (2-prop.\ $z$)           & $K/h^2$    & $736$    & $515$ \\
R \texttt{pwr::pwr.2p.test}          & $K/h^2$    & $736$    & $515$ \\
\texttt{statsmodels.NormalIndPower}  & $2K/h^2$   & $1{,}471$ & $1{,}030$ \\
\texttt{llm\_power} (paired)         & $\mathrm{Var}(\Delta)/\delta^2$ & -- & $1{,}028$ \\
\bottomrule
\end{tabular}
\end{table}

\cref{tab:misuse} traces the comparison. Three of the five tools (Cohen's textbook, G*Power 3.1, R's \texttt{pwr}) reproduce the shortcut's factor-of-two underestimate when $(1{-}\rho)$ is applied to a per-arm $K/h^2$ output, because the paired adjustment is being applied to a single-arm variance rather than to $\mathrm{Var}(\Delta)$. The remaining two recover the correct $N^{\star} \approx 1{,}028$: \texttt{statsmodels.NormalIndPower} uses a $2K/h^2$ per-arm convention that already accounts for both arms, and \texttt{llm\_power} computes the paired-difference variance directly. This is not a calculator-correctness issue: every tool returns a defensible quantity. It is a user-error pattern that the tools make easy, and \cref{lem:shortcut} is its quantitative consequence at the leaderboard adjacency regime.

\paragraph{Field practice.} The same pattern surfaces in the published literature. The unpaired Gaussian approximation of \citet{miller2024errorbars} is the prevalent closed-form required-$N$ treatment for LLM accuracy comparisons; paired-variance corrections via McNemar--Connor are rarely surfaced, even on data with shared prompts. The NLP power advocacy of \citet{card2020little} and the test-selection survey of \citet{dror2018hitchhiker} flag McNemar's availability but do not contrast paired and unpaired required-$N$ on the same data. \cref{lem:shortcut} and \cref{tab:misuse} together explain why this shortcut is quantitatively material at leaderboard adjacencies.

\section{Finite-sample calibration}
\label{sec:calib}

\Cref{eq:nstar} treats the rejection threshold as exact at any $N$, but the underlying normal approximation has a finite-$N$ remainder. We check empirically how big that remainder is. We calibrate five paired-binary test variants on simulated paired-Bernoulli populations: McNemar $\chi^2_1$, exact conditional binomial, mid-$p$, continuity-corrected McNemar, and the paired bootstrap of \cref{def:bpow}. The grid covers marginal accuracies $p\in\{0.5,0.7,0.9\}$, latent-Gaussian correlations $\rho_z\in\{0,0.4,0.8\}$ used to generate the Bernoulli pair, and sample size $n{=}500$. The per-cell $\delta$ under $H_1$ is tuned so the McNemar-Connor asymptotic target is exactly $0.80$ at each cell (Monte Carlo standard error ${\approx}0.6$~pp on Type-I, ${\approx}1.0$~pp on power). \Cref{tab:variants} reports cell-level Type-I deviations and empirical power; the bootstrap is the variant we recommend when no closed-form $\sigma_D$ is available.

\begin{definition}[Paired bootstrap test]
\label{def:bpow}
Given a paired score-matrix $\{(X^A_i,X^B_i)\}_{i=1}^N$, the two-sided percentile-bootstrap test of $H_0: \mathbb{E}[D_i]=0$ at level $\alpha$ resamples prompt indices with replacement, computes $\bar D^{(b)}$ on each resample, and rejects when $0 \notin \mathrm{CI}_{1-\alpha}$ from the percentile distribution of $\{\bar D^{(b)}\}_{b=1}^B$.\footnote{The percentile variant is chosen for simplicity; the studentized bootstrap \citep[Ch.~3]{hall1992bootstrap} is asymptotically more accurate. Empirically (\cref{tab:variants}), the percentile bootstrap's Type-I deviation on our $5$-cell grid is within $0.9$~pp of nominal, so the simpler variant suffices at the $N$ regime of interest.}
\end{definition}

\begin{table*}[t]
\caption{Empirical Type-I and power of five paired-binary test variants on a $5$-cell $(p,\rho,n)$ grid ($M{=}1500$ trials per cell; setup in \S\ref{sec:calib}).}
\label{tab:variants}
\centering\small
\setlength{\tabcolsep}{5pt}
\begin{tabular}{@{}llcccc@{}}
\toprule
Variant & Null & Data & $|\hat\alpha{-}\alpha|_{\max}$ (pp) & Power med.\ ($H_1$ at $0.80$) & max dev.\ from $0.80$ (pp) \\
\midrule
McNemar $\chi^2_1$        & marginal & binary        & $0.9$ & $0.79$ & $5.3$ \\
Exact conditional         & sharp    & binary        & $1.1$ & $0.76$ & $10.2$ \\
Mid-$p$                    & sharp    & binary        & $0.8$ & $0.79$ & $5.9$ \\
Continuity-corrected      & marginal & binary        & $1.1$ & $0.76$ & $10.6$ \\
Paired bootstrap (\cref{def:bpow})    & marginal & binary/graded & $0.9$ & $0.79$ & $5.2$ \\
\bottomrule
\end{tabular}
\\[2pt]
{\footnotesize Randomisation on discordant signs coincides with the exact conditional binomial in the binary case. The paired-$t$ / paired-bootstrap row applies to graded data only \citep{efron1993bootstrap,hall1992bootstrap}.}
\end{table*}

\paragraph{Empirical Type-I and power.} All five variants are calibrated within $1.1$~pp of $\alpha=0.05$. Under $H_1$ tuned to a $0.80$ asymptotic target, the asymptotic trio (McNemar $\chi^2_1$, mid-$p$, paired bootstrap) achieves median empirical power $0.79$, while the exact and continuity-corrected variants are ${\approx}3$~pp conservative; the remaining gap is the $O(N^{-1/2})$ remainder of the normal approximation in \cref{eq:power}. On synthetic Bernoulli and Beta$(4,2)$ graded marginals, the bootstrap tracks the parametric required-$N$ to within $4$--$6\%$ (\cref{app:synth}). \emph{Recommendation:} use \cref{eq:reqn_exact} with empirical $\hat\rho$ for binary accuracy; use the paired bootstrap of \cref{def:bpow} for graded metrics.

\section{Results}
\label{sec:resolution}

We apply the inversion to Open LLM Leaderboard v1 (\S\ref{sec:resolution-v1}) and the OLL v2 MMLU-Pro top-$10$ (\S\ref{sec:resolution-mmlupro}). All required-$N$ figures are at $(\alpha,1{-}\beta)=(0.05,0.8)$.

\subsection{OLL v1: 40 unique pairwise comparisons}
\label{sec:resolution-v1}

We pulled per-question $0/1$ scores for five $7$--$8$B-parameter open-weights models (Llama-3-8B$\pm$Instruct, Mistral-7B-Instruct-v0.2, Gemma-7B$\pm$it) on four tasks (ARC-Challenge, HellaSwag, Winogrande, GSM8K) from the EleutherAI \texttt{lm-evaluation-harness} dumps released via the OLL \texttt{details\_*} repos. Each task contributes $\binom{5}{2}=10$ unique pairwise comparisons (not adjacent-rank only), $40$ comparisons in total.

\Cref{fig:headline} plots $N^{\star}$ from \cref{eq:reqn_exact} at the observed $(\hat p_A,\hat p_B,\hat\rho)$ against the actual benchmark size; markers above $y=x$ are unresolved. The $|\delta|$-binned breakdown (\cref{tab:bucket}) shows the resolution boundary near $|\delta|\approx 5\%$: every pair with $|\delta|\leq 2\%$ is unresolved and every pair with $|\delta|>5\%$ is resolved; the $2$--$5\%$ band is mixed.

\begin{figure*}[t]
\centering
\includegraphics[width=\textwidth]{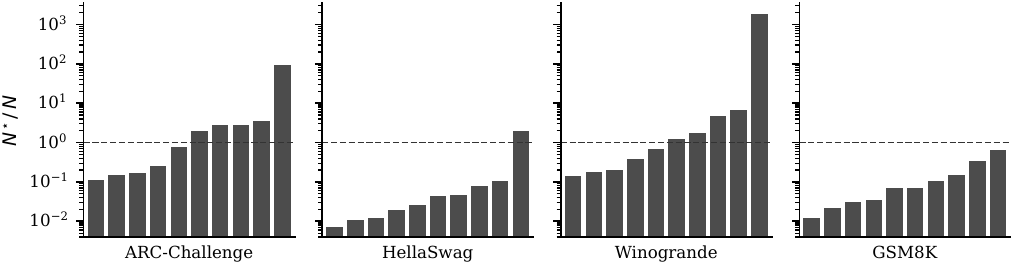}
\caption{Resolution diagnostic on $40$ OLL v1 pairwise comparisons, faceted by task. Each bar gives $r = N^{\star}/N = 1/q$ for one pair, sorted ascending within task; $N^{\star}$ at observed $(\hat p_A,\hat p_B,\hat\rho)$ via \cref{eq:reqn_exact}. Bars above the dashed line ($r > 1$) are unresolved at $(\alpha,1{-}\beta)=(0.05,0.8)$ ($11/40$ across all panels).}
\label{fig:headline}
\end{figure*}

\begin{table}[t]
\caption{Fraction of OLL v1 pairs with resolution ratio $q < 1$, binned by $|\delta|$. The two ratio columns are $r = N^{\star}/N = 1/q$ summarised across pairs in each bucket; $r > 1$ means unresolved.}
\label{tab:bucket}
\centering\footnotesize
\setlength{\tabcolsep}{4pt}
\begin{tabular}{lrrrr}
\toprule
$|\delta|$ bucket & pairs & unresolved & $r$ med.\ & $r$ worst \\
\midrule
$\leq 1\%$       &  $3$ & $3$ ($100\%$)  & $94$         & $1{,}892$ \\
$1\%$--$2\%$     &  $4$ & $4$ ($100\%$)  & $4.2$        & $6.8$ \\
$2\%$--$5\%$     & $10$ & $4$ ($40\%$)   & $0.75$       & $2.8$ \\
$5\%$--$15\%$    & $17$ & $0$ ($0\%$)    & $0.15$       & $0.65$ \\
$>\!15\%$        &  $6$ & $0$ ($0\%$)    & $0.03$       & $0.07$ \\
\midrule
all              & $40$ & $11$ ($28\%$)  & $0.16$       & $1{,}892$ \\
\bottomrule
\end{tabular}
\end{table}

\paragraph{Bootstrap uncertainty on $N^{\star}$.} Plug-in $N^{\star}$ inherits the sampling uncertainty of $(\hat p_A,\hat p_B,\hat\rho)$. We resample prompts with replacement ($B=500$), recompute, and report the $5$th--$95$th-percentile interval. Among the $7$ OLL v1 close pairs ($|\delta|\leq 2\%$), $4$ have a $5$th-percentile $N^{\star}$ exceeding $N$ (robustly unresolved); the remaining $3$ have intervals that span the diagonal because $\hat\delta$ is poorly estimated near zero.

\paragraph{Prospective design validation.} To verify that the framework's $N^{\star}$ prescription actually achieves the target power on real data, we picked three OLL v1 pairs spanning $|\delta| \in [6.3, 10.1]$~pp, computed $N^{\star}$ via \cref{eq:reqn_exact} at $(\hat p_A,\hat p_B,\hat\rho)$, bootstrap-subsampled to $N^{\star}$ prompts, and ran McNemar at $\alpha{=}0.05$ over $M{=}1000$ trials each. Empirical power at $N^{\star}$ lands at $0.796$--$0.827$, within $\pm 2.7$pp of the $0.80$ target; at $0.8\,N^{\star}$ and $1.2\,N^{\star}$ it correctly under- and over-shoots. The framework's prescription is calibrated on the data, not only in asymptotic theory. Per-pair table in \cref{app:prospective_multiarbiter}.

\paragraph{Multi-arbiter agreement on close pairs.} On every $|\delta|\leq 2\%$ pair, four arbiters (asymptotic McNemar $\chi^2_1$, exact two-sided conditional binomial, mid-$p$, and the paired bootstrap of \cref{def:bpow}) agree on six of seven close pairs (all unanimous fail-to-reject); only the leaderboard-displayed-significant HellaSwag pair flagged in \S\ref{sec:intro} produces a split verdict ($\chi^2_1\!:\!p{=}.049$ rejects; exact: $p{=}.054$ does not). Per-pair $p$-values in \cref{app:prospective_multiarbiter}.

\subsection{MMLU-Pro paired item-level tightening (OLL v2)}
\label{sec:resolution-mmlupro}

For the largest OLL v2 task with accessible per-item correctness, we pull the gated \texttt{open-llm-leaderboard/*\_details} parquets for the top-$10$ MMLU-Pro models ($N=12{,}032$ items) and compute $N^{\star}$ for the $9$ adjacent-rank pairs. The paired-Bernoulli formula (\cref{eq:reqn_exact}) and the discordance-form McNemar required-$N$ agree to within $1\%$ on every pair (\cref{fig:mcnemar_mmlupro}). Four of nine adjacent-rank pairs are unresolved at $N=12{,}032$ under the adjacent-rank multiplicity convention $m=9$. The all-pairs convention $m=\binom{10}{2}=45$ inflates $N^{\star}$ by ${\approx}2.14$, but every unresolved pair already has $q < 1$, so all four verdicts stand.

\Cref{lem:shortcut} holds tightly on this data: the shortcut $n_h$ is below $N^{\star}$ by the predicted factor of two on every pair (median ratio $0.500$, range $[0.496, 0.500]$, despite $\rho$ ranging over $[0.45, 0.99]$).

\paragraph{External replication.} \Cref{app:frontier} reports a closed-source frontier panel (4 models, MMLU-Pro $N{=}1{,}350$ all-four-scored): one adjacent pair (Llama-4-Maverick vs.\ DeepSeek-V3.2, gap $2.8$~pp) below resolution, shortcut/$N^{\star}$ in $[0.496, 0.500]$ replicating \cref{lem:shortcut} on a different model class (illustrative; deployments rotate).

\begin{figure}[t]
\centering
\includegraphics[width=\columnwidth]{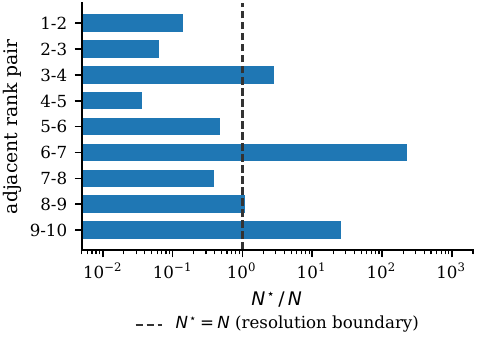}
\caption{MMLU-Pro top-$10$ adjacent pairs (OLL v2, $N=12{,}032$). Bars give $r = N^{\star}/N = 1/q$ for each adjacent-rank pair (log scale); the dashed line marks $r = 1$. Pairs whose bar extends past the dashed line ($r > 1$, unresolved) are unresolved at $(\alpha,1{-}\beta)=(0.05,0.8)$: four of nine.}
\label{fig:mcnemar_mmlupro}
\end{figure}

\paragraph{Sensitivity to $\hat\rho$ misspecification.} The diagnostic uses the empirical paired-correlation $\hat\rho$, which is itself estimated. Perturbing $\hat\rho$ by $\pm 0.10$ on every pair (clamped to the admissible interval) moves the OLL v1 unresolved count to $[9, 12]/40$ and the MMLU-Pro count to $[2, 4]/9$; the qualitative leaderboard message survives the perturbation. Three of $40$ OLL v1 pairs and $2$ of $9$ MMLU-Pro pairs flip verdict; in every case a boundary pair with $|\hat\delta|\le 4$~pp, including the HellaSwag pair from \S\ref{sec:intro} ($\hat\delta{=}0.46$~pp, $\hat\rho{=}0.81$) which becomes resolved at $\hat\rho{+}0.10{=}0.91$. Boundary-case sensitivity is expected.

\paragraph{Stress-test sequence.} The next three subsections stress the verdicts of \S\ref{sec:resolution} against family-level multiplicity (\S\ref{sec:multiplicity}), real subject-level clustering (\S\ref{sec:cluster}), and anytime-valid sequential testing (\S\ref{sec:anytime}). Each adjustment strictly tightens the verdict (\cref{tab:verdict}); the qualitative leaderboard message, that displayed adjacent gaps in this regime are often under-supported, survives all three.

\subsection{Leaderboard-scale multiplicity}
\label{sec:multiplicity}

We report multiplicity under two pre-declared families: \emph{adjacent-rank claims} (family size $K{-}1$) and \emph{all-pairs leaderboard claims} (family size $\binom{K}{2}$). Families are declared before observing $p$-values.

\paragraph{Bonferroni / Holm.} Running the $40$ OLL v1 pairs at nominal $\alpha=0.05$ inflates the family-wise error rate. Bonferroni control replaces $z_{1-\alpha/2}=1.960$ with $z_{1-\alpha/(2\cdot 40)}\approx 3.227$, multiplying $N^{\star}$ by ${\approx}2.11$. \v{S}id\'ak yields a near-identical $2.10\times$ inflation; Holm \citep{holm1979simple} is bounded above by Bonferroni at every position, so $2.11\times$ is an upper bound on the Holm-adjusted required-$N$. Applying $2.11\times$ to \cref{tab:bucket}, every $|\delta|\leq 2\%$ pair remains unresolved, the $2$--$5\%$ unresolved fraction rises from $4/10$ to $6/10$, and $16/17$ of the $5$--$15\%$ pairs stay resolved (the worst-case pair at $r{=}0.65$ flips since $0.65{\times}2.11{>}1$); the $>\!15\%$ bucket stays fully resolved.

\paragraph{Benjamini--Hochberg.} Controlling the false-discovery rate at $0.05$ via Benjamini--Hochberg \citep{benjamini1995controlling} is less conservative. On these data it effectively coincides with the $|\delta|\approx 5\%$ resolution boundary (close pairs produce $p$-values far from significance, large-gap pairs essentially at zero), so no $|\delta|>5\%$ rejection is retracted under FDR control.

\subsection{Cluster-aware sensitivity}
\label{sec:cluster}

Real benchmarks contain topic clusters, near-duplicates, and templated subdomains \citep{madaan2024variance,jo2025whatdoes} that induce intra-cluster correlation in the per-item paired difference $D_i$. The cluster-aware required-$N$ scales the IID estimate by the design effect $\mathrm{DE} = 1 + (\bar m - 1)\,\mathrm{ICC}(D)$, where $\bar m$ is mean cluster size and $\mathrm{ICC}(D)$ is the intra-cluster correlation of $D$. We compute $\mathrm{ICC}(D)$ empirically on MMLU-Pro using the dataset's $14$ subject categories \citep{madaan2024variance} as natural clusters ($\bar m \approx 859$ items per category); \cref{tab:cluster} reports per-pair ICC, DE, and cluster-adjusted $N^{\star}$ for the $9$ adjacent-rank top-$10$ pairs.

\paragraph{Real-data verdict.} Median empirical $\mathrm{ICC}(D) = 0.0010$ across the $9$ pairs; the distribution is heavy-tailed (range $[-3\!\times\!10^{-4},\, 3.6\!\times\!10^{-2}]$). Median DE $=1.88$ but two pairs (rank $4$ vs $5$, rank $5$ vs $6$) show large clustering ($\mathrm{DE} = 31.5$ and $6.7$): paired-difference homogeneity is high within categories for these pairs because the model gap is dominated by specific subject domains. Under real cluster correction, the unresolved count rises from $4/9$ at IID to $6/9$: rank $4$ vs $5$ flips from $N^{\star}_{\mathrm{IID}}=432$ (well-resolved) to $N^{\star}_{\mathrm{cluster}}=13{,}621$ ($>N$); rank $5$ vs $6$ flips from $5786$ to $39{,}009$.

\paragraph{Bootstrap stability of the verdict.} With only $K{=}14$ clusters, individual ICC point estimates carry nontrivial uncertainty. We address this by cluster-bootstrapping the verdict: resample the $14$ categories with replacement ($B{=}1000$, seed $42$) and recompute the unresolved-pair count. We hold $(\hat p_A,\hat p_B,\hat\rho)$ at the full-data estimates so the resulting CI isolates cluster-structure uncertainty. The three pairs driving the IID-to-cluster flip have $5$th-percentile ICC bounds that are all strictly positive (rank $4$ vs $5$ at $[0.021, 0.044]$, rank $5$ vs $6$ at $[0.003, 0.010]$, rank $6$ vs $7$ at $[0.004, 0.020]$), so the cluster signal that flips them is unlikely to be a $K{=}14$ artefact. Across the full bootstrap, the unresolved-pair count is $5/9$ in $45\%$ of resamples and $6/9$ in $55\%$; $\Pr(\text{unresolved}\!\geq\!5)=99.9\%$, and only $1$ of $1000$ resamples returns to the IID $4/9$ count. Full per-pair CIs are in \cref{app:icc_bootstrap}.

\paragraph{Sensitivity, not certification.} \S\ref{sec:cluster} should be read as a cluster-sensitivity analysis. Three robustness checks support the conclusion despite $K{=}14$: (i) the cluster-bootstrap above (verdict is at $5$--$6$ unresolved out of $9$ in $99.9\%$ of resamples), (ii) a leave-one-subject-out (LOSO) recomputation that drops each MMLU-Pro category in turn (\cref{app:loso}), which holds the unresolved count at $6/9$ on $11$ of $14$ drops and $5/9$ on the remaining $3$, and (iii) the cluster-definition sensitivity below. None of these collapses the result below $5/9$ except under random clusters (the null check), so the headline $4/9\to 6/9$ flip is not an artifact of either category sampling or any single high-ICC subject.

\paragraph{Cluster-definition sensitivity.} Subject categories are one of many possible clusterings. We rerun the \S\ref{sec:cluster} pipeline under three alternatives: random clusters of $K{=}14$ (null check); difficulty quartiles ($K{=}4$, binned by per-item mean accuracy across the top-$10$); and subject sub-clusters ($K{=}28$, each subject split in half by item parity). Random clusters give $\mathrm{ICC}{\approx}0$ and revert to the IID $4/9$ verdict (the null check). Difficulty quartiles produce a stronger cluster signal (median $\mathrm{ICC}{=}0.019$, max $0.19$) and $9/9$ unresolved; subject sub-clusters give $5/9$. The unresolved count thus spans $[5, 9]/9$ across non-null definitions, with the headline $6/9$ at the midpoint.

\begin{table}[t]
\caption{Real-data cluster sensitivity on MMLU-Pro top-$10$ adjacent-rank pairs, using the dataset's $14$ subject categories as clusters ($K=14$, $\bar m\approx 859$). $\mathrm{ICC}(D)$ is the empirical intra-cluster correlation of the paired difference $D_i$ (one-way ANOVA estimator); $\mathrm{DE}=1+(\bar m-1)\mathrm{ICC}^+$ with $\mathrm{ICC}^+ = \max(\mathrm{ICC}, 0)$. Bold $N^{\star}$ exceeds the actual $N{=}12{,}032$.}
\label{tab:cluster}
\centering\footnotesize
\setlength{\tabcolsep}{4pt}
\begin{tabular}{lrrrrr}
\toprule
Pair & $|\hat\delta|$ pp & $\hat\rho$ & ICC$(D)$ & DE & Cluster $N^{\star}$ \\
\midrule
1 vs 2 & 1.18 & 0.92 & 0.0004 & 1.37  & 2{,}327 \\
2 vs 3 & 1.73 & 0.93 & $-$0.0003 & 1.00 & 778 \\
3 vs 4 & 0.10 & 0.99 & 0.0002 & 1.19  & \textbf{40{,}660} \\
4 vs 5 & 6.61 & 0.46 & 0.036  & 31.5  & \textbf{13{,}621} \\
5 vs 6 & 1.88 & 0.45 & 0.0067 & 6.74  & \textbf{39{,}009} \\
6 vs 7 & 0.08 & 0.49 & 0.012  & 11.5  & \textbf{$\geq\!10^{7}$} \\
7 vs 8 & 0.91 & 0.90 & $-$0.0002 & 1.00 & 4{,}628 \\
8 vs 9 & 0.86 & 0.75 & 0.0010 & 1.88  & \textbf{24{,}632} \\
9 vs 10 & 0.22 & 0.58 & 0.0029 & 3.52  & \textbf{$\geq\!10^{6}$} \\
\bottomrule
\end{tabular}
\end{table}

\subsection{Anytime-valid leaderboard testing}
\label{sec:anytime}

Public leaderboards update continuously: each new model triggers up to $K{-}1$ new pairwise tests against the existing entries. A fixed-$n$ test loses Type-I control when stopping can be decided after seeing the data. We replace the fixed McNemar-Connor threshold $z_{1-\alpha/2}$ by an \emph{anytime-valid} threshold (valid simultaneously at every $n$) derived from a paired-Bernoulli mixture e-process \citep{howard2021time,ramdas2023game,vovk2021evalues}. Construction, mixture choice, and Type-I/stopping calibration are in \cref{app:eprocess_construction}.

\paragraph{Applied to MMLU-Pro adjacencies.} At $N{=}12{,}032$ the time-uniform threshold inflates $N^{\star}$ by $2.15\times$ vs.\ fixed-$n$. The numerical proximity to the $m{=}45$ Bonferroni inflation ($2.14\times$, \S\ref{sec:resolution-mmlupro}) and the $m{=}40$ figure ($2.11\times$, \S\ref{sec:multiplicity}) is coincidence: anytime-validity gives a time-uniform boundary; Bonferroni is a family-size correction.

Anytime-valid testing flips one extra verdict relative to fixed-$n$. Five of nine adjacent-rank pairs are unresolved under anytime-validity, vs.\ $4/9$ under fixed-$n$ and Bonferroni-$9$. The extra pair is rank $5$ vs.\ $6$ ($\hat\delta{=}1.88$pp, $\hat\rho{=}0.45$): the fixed-$n$ exact McNemar $p$-value is $5.8\!\times\!10^{-5}$, comfortably rejecting. The anytime-valid threshold, however, is stricter than a one-shot test at the realised discordance, so the same data fails to cross it. \cref{fig:eprocess} traces sample $\log e_n$ trajectories under $H_0$ and $H_1$.

\begin{figure}[t]
\centering
\includegraphics[width=\columnwidth]{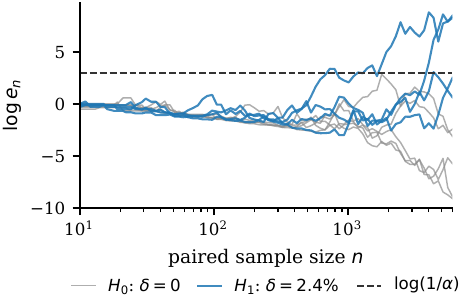}
\caption{Mixture e-process trajectories for paired Bernoulli under $H_0$ ($\delta{=}0$, grey) and $H_1$ ($\delta{=}2.4\%$, blue), calibrated to an ARC pair ($\hat\rho{=}0.64$). The horizontal dashed line is the rejection threshold $\log(1/\alpha)$ at $\alpha{=}0.05$. $H_0$ trajectories stay below; $H_1$ trajectories cross between $n{\sim}10^3$ and $n{\sim}5\!\times\!10^3$ (fixed-$n$ McNemar-Connor $N^{\star}{=}2362$ on this pair).}
\label{fig:eprocess}
\end{figure}

\paragraph{Verdict across procedures.} \cref{tab:verdict} pulls the unresolved counts together. On the larger OLL v1 family ($m{=}40$) the anytime-valid threshold-inflation at typical $N$ coincides with Bonferroni's; on the smaller MMLU-Pro family ($m{=}9$) anytime-validity dominates, flagging one additional pair as sequentially-not-resolvable. The columns of \cref{tab:verdict} answer subtly different questions and are not directly comparable as more-vs-less-correct: the anytime-valid column scores whether a verdict is resolvable \emph{under continuous monitoring of an infinitely-extending stream}, which is a stricter criterion than fixed-$n$ McNemar applied once.

\begin{table}[t]
\caption{Unresolved-pair count across testing paradigms.}
\label{tab:verdict}
\centering\footnotesize
\setlength{\tabcolsep}{4pt}
\begin{tabular}{@{}lcc@{}}
\toprule
Procedure & OLL v1 & MMLU-Pro \\
\midrule
Fixed-$n$              & $11/40$ & $4/9$ \\
Bonferroni / Holm      & $14/40$ & $4/9$ \\
Anytime-valid          & $14/40$ & $5/9$ \\
Real-ICC clustered     & n/a     & $6/9$ \\
\bottomrule
\end{tabular}
\end{table}

\section{Conclusion}
\label{sec:conclusion}

LLM leaderboard claims should report not only a gap and a $p$-value, but the benchmark resolution needed to make that gap detectable under the actual paired design.\footnote[2]{Code and raw responses: \url{https://github.com/akotawala10/llm-power}.} Across OLL v1 and MMLU-Pro, many displayed adjacent gaps fall below this resolution once pairing, multiplicity, clustering, or anytime-validity are made explicit. Clustering is verified by a category-bootstrap and LOSO (\cref{app:icc_bootstrap,app:loso}).

\paragraph{Scope.} The load-bearing claims (the shortcut lemma and the unresolved-pair verdicts) apply to the close-comparison regime where leaderboard adjacencies live. For cross-tier comparisons with large gaps, both paired and unpaired calculators return ``resolved'' at typical benchmark sizes, and the shortcut's factor-of-two underestimate is benign. The lemma is sharpest near $p{=}\tfrac12$, which is where many benchmark accuracies cluster.

\paragraph{Limitations.} Our empirics are restricted to binary accuracy, the displayed metric on the headline benchmarks; graded metrics and pairwise-preference leaderboards are handled only methodologically via \cref{def:bpow} (validated to $4$--$6\%$ on Beta$(4,2)$ marginals, \cref{app:synth}). The MMLU-Pro cluster analysis rests on $K{=}14$ subject categories; LOSO and a category bootstrap (\cref{app:loso,app:icc_bootstrap}) support the headline $4/9 \to 6/9$ flip, but a benchmark with finer-grained natural clusters would tighten the design-effect estimate. The closed-source frontier panel (\cref{app:frontier}) is a snapshot under rotating deployments and replicates \cref{lem:shortcut} only as illustration, not as a durable head-to-head comparison.

\paragraph{Future work.} Four directions extend the framework. (i) Generalise \cref{lem:shortcut} away from the equal-marginal midpoint and to multi-arm power. (ii) Derive a PRDS-aware required-$N$ for leaderboard families with overlapping models, building on \citet{benjamini2001control}. (iii) Plug in a fully clustered-prompt bootstrap with empirically-estimated ICC \citep{jo2025whatdoes} as the principled refinement of the design-effect estimate. (iv) Instantiate the resolution diagnostic on judge-scored and pairwise-preference (Chatbot-Arena-style) leaderboards via Bradley--Terry e-processes. Construct validity \citep{freiesleben2025epistemology,bean2025matters,alaa2025medical} composes with resolution: a benchmark can be statistically resolvable and still fail to measure what it claims to.

\bibliographystyle{icml2026}
\bibliography{refs_ht}

@article{miller2024errorbars,
  title         = {Adding Error Bars to Evals: A Statistical Approach to Language Model Evaluations},
  author        = {Miller, Evan},
  journal       = {arXiv preprint arXiv:2411.00640},
  year          = {2024},
  url           = {https://arxiv.org/abs/2411.00640}
}

@article{madaan2024variance,
  title         = {Quantifying Variance in Evaluation Benchmarks},
  author        = {Madaan, Lovish and Singh, Aaditya K. and Schaeffer, Rylan and Poulton, Andrew and Koyejo, Sanmi and Stenetorp, Pontus and Narang, Sharan and Hupkes, Dieuwke},
  journal       = {arXiv preprint arXiv:2406.10229},
  year          = {2024},
  url           = {https://arxiv.org/abs/2406.10229}
}

@article{jo2025whatdoes,
  title         = {What Does Your Benchmark Really Measure? A Framework for Robust Inference of {AI} Capabilities},
  author        = {Jo, Nathanael and Wilson, Ashia},
  journal       = {arXiv preprint arXiv:2509.19590},
  year          = {2025},
  url           = {https://arxiv.org/abs/2509.19590}
}

@article{freiesleben2025epistemology,
  title         = {The Benchmarking Epistemology: Construct Validity for Evaluating Machine Learning Models},
  author        = {Freiesleben, Timo and Zezulka, Sebastian},
  journal       = {arXiv preprint arXiv:2510.23191},
  year          = {2025},
  url           = {https://arxiv.org/abs/2510.23191}
}

@article{bean2025matters,
  title         = {Measuring what Matters: Construct Validity in Large Language Model Benchmarks},
  author        = {Bean, Andrew M. and Kearns, Ryan Othniel and Romanou, Angelika and Hafner, Franziska Sofia and Mayne, Harry and others},
  journal       = {arXiv preprint arXiv:2511.04703},
  year          = {2025},
  note          = {NeurIPS 2025 Datasets and Benchmarks Track},
  url           = {https://arxiv.org/abs/2511.04703}
}

@article{alaa2025medical,
  title         = {Position: Medical Large Language Model Benchmarks Should Prioritize Construct Validity},
  author        = {Alaa, Ahmed and Hartvigsen, Thomas and Golchini, Niloufar and Dutta, Shiladitya and Dean, Frances and Raji, Inioluwa Deborah and Zack, Travis},
  journal       = {arXiv preprint arXiv:2503.10694},
  year          = {2025},
  note          = {ICML 2025 (Position track)},
  url           = {https://arxiv.org/abs/2503.10694}
}

@book{cohen1988power,
  title         = {Statistical Power Analysis for the Behavioral Sciences},
  author        = {Cohen, Jacob},
  edition       = {2nd},
  publisher     = {Lawrence Erlbaum Associates},
  year          = {1988}
}

@book{efron1993bootstrap,
  title         = {An Introduction to the Bootstrap},
  author        = {Efron, Bradley and Tibshirani, Robert J.},
  publisher     = {Chapman \& Hall/CRC},
  year          = {1993}
}

@inproceedings{card2020little,
  title         = {With Little Power Comes Great Responsibility},
  author        = {Card, Dallas and Henderson, Peter and Khandelwal, Urvashi and Jia, Robin and Mahowald, Kyle and Jurafsky, Dan},
  booktitle     = {Proceedings of the 2020 Conference on Empirical Methods in Natural Language Processing (EMNLP)},
  pages         = {9263--9274},
  year          = {2020}
}

@inproceedings{dror2018hitchhiker,
  title         = {The Hitchhiker's Guide to Testing Statistical Significance in Natural Language Processing},
  author        = {Dror, Rotem and Baumer, Gili and Shlomov, Segev and Reichart, Roi},
  booktitle     = {Proceedings of the 56th Annual Meeting of the Association for Computational Linguistics (ACL)},
  pages         = {1383--1392},
  year          = {2018}
}

@article{connor1987sample,
  title         = {Sample Size for Testing Differences in Proportions for the Paired-Sample Design},
  author        = {Connor, Robert J.},
  journal       = {Biometrics},
  volume        = {43},
  number        = {1},
  pages         = {207--211},
  year          = {1987}
}

@article{mcnemar1947note,
  title         = {Note on the Sampling Error of the Difference Between Correlated Proportions or Percentages},
  author        = {McNemar, Quinn},
  journal       = {Psychometrika},
  volume        = {12},
  number        = {2},
  pages         = {153--157},
  year          = {1947}
}

@article{hoenig2001abuse,
  title         = {The Abuse of Power: The Pervasive Fallacy of Power Calculations for Data Analysis},
  author        = {Hoenig, John M. and Heisey, Dennis M.},
  journal       = {The American Statistician},
  volume        = {55},
  number        = {1},
  pages         = {19--24},
  year          = {2001}
}

@article{agresti2005simple,
  title         = {Simple Improved Confidence Intervals for Comparing Matched Proportions},
  author        = {Agresti, Alan and Min, Yongyi},
  journal       = {Statistics in Medicine},
  volume        = {24},
  number        = {5},
  pages         = {729--740},
  year          = {2005}
}

@article{liddell1983simplified,
  title         = {Simplified Exact Analysis of Case-Referent Studies: Matched Pairs; Dichotomous Exposure},
  author        = {Liddell, F. D. K.},
  journal       = {Journal of Epidemiology and Community Health},
  volume        = {37},
  number        = {1},
  pages         = {82--84},
  year          = {1983}
}

@article{benjamini1995controlling,
  title         = {Controlling the False Discovery Rate: A Practical and Powerful Approach to Multiple Testing},
  author        = {Benjamini, Yoav and Hochberg, Yosef},
  journal       = {Journal of the Royal Statistical Society: Series B (Methodological)},
  volume        = {57},
  number        = {1},
  pages         = {289--300},
  year          = {1995}
}

@book{hall1992bootstrap,
  title         = {The Bootstrap and Edgeworth Expansion},
  author        = {Hall, Peter},
  publisher     = {Springer},
  year          = {1992}
}

@article{holm1979simple,
  title         = {A Simple Sequentially Rejective Multiple Test Procedure},
  author        = {Holm, Sture},
  journal       = {Scandinavian Journal of Statistics},
  volume        = {6},
  number        = {2},
  pages         = {65--70},
  year          = {1979}
}

@article{benjamini2001control,
  title         = {The Control of the False Discovery Rate in Multiple Testing under Dependency},
  author        = {Benjamini, Yoav and Yekutieli, Daniel},
  journal       = {Annals of Statistics},
  volume        = {29},
  number        = {4},
  pages         = {1165--1188},
  year          = {2001}
}

@article{howard2021time,
  title         = {Time-uniform, nonparametric, nonasymptotic confidence sequences},
  author        = {Howard, Steven R. and Ramdas, Aaditya and McAuliffe, Jon and Sekhon, Jasjeet},
  journal       = {Annals of Statistics},
  volume        = {49},
  number        = {2},
  pages         = {1055--1080},
  year          = {2021}
}

@article{ramdas2023game,
  title         = {Game-theoretic statistics and safe anytime-valid inference},
  author        = {Ramdas, Aaditya and Gr{\"u}nwald, Peter and Vovk, Vladimir and Shafer, Glenn},
  journal       = {Statistical Science},
  volume        = {38},
  number        = {4},
  pages         = {576--601},
  year          = {2023}
}

@inproceedings{polo2024tinybenchmarks,
  title         = {tiny{B}enchmarks: Evaluating {LLM}s with Fewer Examples},
  author        = {Polo, Felipe Maia and Weber, Lucas and Choshen, Leshem and Sun, Yuekai and Xu, Gongjun and Yurochkin, Mikhail},
  booktitle     = {Proceedings of the 41st International Conference on Machine Learning (ICML)},
  year          = {2024}
}

@article{waudby2024betting,
  title         = {Estimating Means of Bounded Random Variables by Betting},
  author        = {Waudby-Smith, Ian and Ramdas, Aaditya},
  journal       = {Journal of the Royal Statistical Society: Series B (Statistical Methodology)},
  volume        = {86},
  number        = {1},
  pages         = {1--27},
  year          = {2024}
}

@article{grunwald2024safe,
  title         = {Safe Testing},
  author        = {Gr{\"u}nwald, Peter and de Heide, Rianne and Koolen, Wouter},
  journal       = {Journal of the Royal Statistical Society: Series B (Statistical Methodology)},
  year          = {2024},
  note          = {Read paper, with discussion}
}

@article{vovk2021evalues,
  title         = {E-values: Calibration, combination and applications},
  author        = {Vovk, Vladimir and Wang, Ruodu},
  journal       = {Annals of Statistics},
  volume        = {49},
  number        = {3},
  pages         = {1736--1754},
  year          = {2021}
}

\clearpage
\appendix
\onecolumn

\begin{center}
\noindent\rule{\linewidth}{0.6pt}\\[2pt]
{\large\bfseries Appendix}\\[2pt]
\noindent\rule{\linewidth}{0.6pt}\\[6pt]
\textit{Supplementary material to ``Resolution Diagnostics for Paired LLM Evaluation.''
Sections~A--J cover, respectively, the proof of \cref{lem:shortcut}, synthetic Bernoulli/graded validation, the mixture e-process construction and calibration, the illustrative closed-source replication, raw discordance and pair details, leave-one-subject-out cluster robustness, the cluster-bootstrap CIs on MMLU-Pro ICC, prospective validation and multi-arbiter agreement on OLL v1 close pairs, a reporting checklist for paired-leaderboard claims, and the \texttt{llm-power} API.}
\end{center}
\vskip 0.3in

\section{Proof of \cref{lem:shortcut}}
\label{app:proof}

Let $\phi(t)=\arcsin\sqrt{t}$, so $h(p_A,p_B)=2[\phi(p_A)-\phi(p_B)]$ with $\phi'(t)=1/[2\sqrt{t(1{-}t)}]$ and $u(t):=t(1{-}t)$. Under the midpoint parameterization $p_A = p + \delta/2$, $p_B = p - \delta/2$, expand symmetrically around $p$:
\begin{align}
\phi(p_A) - \phi(p_B) &= \delta\,\phi'(p) + \frac{\delta^3}{24}\phi'''(p) + O(\delta^5),
\end{align}
the even-order ($\delta^2$, $\delta^4$, $\ldots$) Taylor terms cancelling by symmetry of $\phi$ around $p$. Squaring,
\begin{equation}
h^2 = 4\delta^2 \phi'(p)^2\,\bigl[1 + \tfrac{\delta^2}{12}\,\phi'''(p)/\phi'(p) + O(\delta^4)\bigr].
\end{equation}
Direct computation gives $\phi'(p)^2 = 1/(4u)$ and $\phi'''(p)/\phi'(p) = 1/u + 3(1{-}2p)^2/(4u^2)$, so
\begin{equation}
\label{eq:h2-expansion}
h^2 = \frac{\delta^2}{u}\,\Bigl[1 + \delta^2\!\left(\tfrac{1}{12u} + \tfrac{(1{-}2p)^2}{16u^2}\right) + O(\delta^4)\Bigr].
\end{equation}
The $O(\delta^2)$ correction inside the bracket contributes at the same order as the $\sigma_D^2$ correction below, so it cannot be dropped.

For the paired-difference variance \cref{eq:vardiff}, expand symmetrically. The sum $u(p_A)+u(p_B) = 2u(p) + (\delta^2/4)u''(p) + O(\delta^4)$, and since $u''(t)=-2$,
\begin{equation}
u(p_A)+u(p_B) = 2u(p) - \tfrac{\delta^2}{2} + O(\delta^4).
\end{equation}
For the product, write $u(p_A) \approx u(p) + (\delta/2)u'(p) + (\delta^2/8)u''(p)$ and similarly for $u(p_B)$, then
\begin{align}
u(p_A)u(p_B) &= \!\bigl[u(p) - \tfrac{\delta^2}{4}\bigr]^2 - \!\bigl[\tfrac{\delta}{2}u'(p)\bigr]^2 + O(\delta^4) \nonumber\\
&= u(p)^2 - \tfrac{\delta^2}{2}u(p) - \tfrac{\delta^2}{4}u'(p)^2 + O(\delta^4),
\end{align}
so
\begin{equation}
\sqrt{u(p_A)u(p_B)} = u(p) - \delta^2\!\left(\tfrac{1}{4} + \tfrac{u'(p)^2}{8u(p)}\right) + O(\delta^4).
\end{equation}
With $u'(p)=1{-}2p$, substitution into \cref{eq:vardiff} gives
\begin{align}
\sigma_D^2 &= 2u(p)(1{-}\rho) \nonumber\\
&\quad + \tfrac{\delta^2}{4}\!\left[\rho\,\tfrac{(1{-}2p)^2}{u(p)} - 2(1{-}\rho)\right] + O(\delta^4).
\end{align}

The shortcut-to-paired ratio is then
\begin{align}
\frac{n_h}{N^{\star}} &= \frac{(1{-}\rho)\,\delta^2}{h^2\,\sigma_D^2}
= \frac{1}{2\,(1 + \delta^2 H_2)(1 + \delta^2 V_2)} \nonumber\\
&= \tfrac{1}{2}\bigl[1 - \delta^2(H_2 + V_2) + O(\delta^4)\bigr],
\end{align}
where the corrections are
\begin{align}
H_2 &= \tfrac{1}{12u} + \tfrac{(1{-}2p)^2}{16u^2},\\
V_2 &= \tfrac{\rho(1{-}2p)^2}{8u^2(1{-}\rho)} - \tfrac{1}{4u}.
\end{align}
Combining,
\begin{equation}
H_2 + V_2 = \tfrac{(1{+}\rho)(1{-}2p)^2}{16(1{-}\rho)u^2} - \tfrac{1}{6u}.
\end{equation}

Taking absolute values,
\begin{equation}
\left|\frac{n_h}{N^{\star}} - \frac{1}{2}\right| \leq \frac{1}{2}\!\left|\frac{(1{+}\rho)(1{-}2p)^2}{16(1{-}\rho)u^2} - \frac{1}{6u}\right|\,\delta^2 + O(\delta^4),
\end{equation}
giving $C(p,\rho)$ as in \cref{eq:Cprho}. Convergence is uniform on any compact $(p,\rho)$ set in which $u(p)\ge u_0>0$ and $1-\rho \ge \eta_0 > 0$, because $C(p,\rho)$ is then bounded above and the $O(\delta^4)$ remainder uniform. \cref{cor:underest} follows by inverting $C(p,\rho)\delta^2 \le \epsilon$. \qed

\paragraph{Numerical cross-check.} Direct per-cell evaluation on the heatmap grid of \texttt{e6\_ratio\_heatmap.csv} confirms the leading-order coefficient: the empirical $|n_h/N^{\star} - \tfrac12|/\delta^2$ matches the closed-form $C(p,\rho)$ to four significant figures at small $\delta$ on every cell ($p \in \{0.5, 0.65, 0.8\}$, $\rho \in [0, 0.7]$, $\delta \in [0.005, 0.20]$; median relative error $0.08\%$ at $\delta \le 0.05$). The full inequality $|n_h/N^{\star} - \tfrac12| \le C(p,\rho)\delta^2 + O(\delta^4)$ from \cref{lem:shortcut} holds with the explicit $O(\delta^4)$ remainder: at the upper end of the grid (e.g.\ $(p,\rho,\delta) = (0.8, 0.5, 0.20)$), the $O(\delta^4)$ correction adds up to ${\approx}17\%$ on top of $C\delta^2$, which is the expected behaviour of a small-$\delta$ Taylor bound. At $p=0.5$ the $(1{-}2p)^2$ factor vanishes a fortiori, giving the clean sanity check $C(0.5,\rho) = 1/(12u) = 1/3$ for every admissible $\rho$ (independent of $\rho$). Off-midpoint the constant grows with both $|1{-}2p|$ and $\rho$: empirically $C(0.65, 0.0) \approx 0.31$, $C(0.8, 0.5) \approx 0.80$, $C(0.65, 0.9) \approx 0.67$.

\section{Synthetic Bernoulli and graded validation}
\label{app:synth}

\Cref{fig:synth} reports bootstrap power against $n$ on synthetic paired Bernoulli data with $p_A=0.65$, $\delta\in\{0.02,0.04,0.08\}$, $\rho\approx 0.3$ via a latent Gaussian copula, $N=30{,}000$. The bootstrap crosses the $0.8$ target within $5\%$ of $N^{\star}$ from \cref{eq:reqn_exact} for all three $\delta$. \Cref{fig:graded} repeats the exercise on Beta$(4,2)$ marginals (graded stress test); the bootstrap tracks the paired-$t$ required-$N$ to within $4$--$6\%$.

\begin{figure}[h]
\centering
\includegraphics[width=0.7\columnwidth]{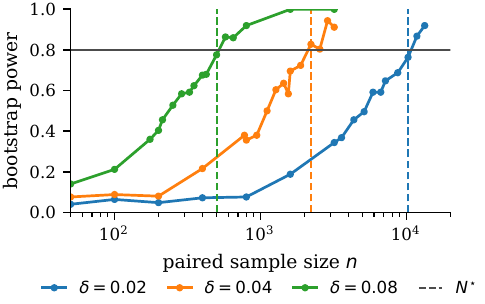}
\caption{Bootstrap power against paired sample size $n$ on synthetic paired Bernoulli data, three $\delta$. Dashed verticals mark $N^{\star}$ from \cref{eq:reqn_exact}.}
\label{fig:synth}
\end{figure}

\begin{figure}[h]
\centering
\includegraphics[width=0.7\columnwidth]{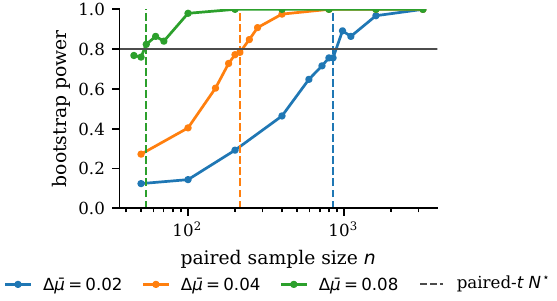}
\caption{Non-Bernoulli stress test: paired graded scores from Beta$(4,2)$ marginals. Bootstrap tracks the paired-$t$ required-$N$ within $\pm 6\%$.}
\label{fig:graded}
\end{figure}

\section{Mixture e-process: construction and calibration}
\label{app:eprocess_construction}

\paragraph{Construction.} Conditioned on $b{+}c$ discordant pairs, the sign sequence is i.i.d.\ Bernoulli$(\tfrac12)$ under $H_0$. With a discrete mixture prior $\nu$ over the alternative discordance probability $\theta \in (0, \tfrac12) \cup (\tfrac12, 1)$, the e-process is
\begin{equation}
\label{eq:eprocess}
e_n \;=\; \int \frac{\theta^{b_n}(1-\theta)^{c_n}}{(1/2)^{b_n + c_n}}\,\mathrm{d}\nu(\theta),
\end{equation}
By Ville's inequality (a martingale maximal inequality), $e_n$ has $\Pr(\sup_n e_n \geq 1/\alpha) \leq \alpha$ at any stopping time. The rejection rule ``reject the first $n$ at which $e_n \geq 1/\alpha$'' is therefore anytime-valid.

\paragraph{Mixture choice.} We use a discrete uniform mixture over $\theta \in \{0.01,\ldots,0.49,\,0.51,\ldots,0.99\}$ with equal weights, for two reasons. First, it is conjugate to the discordant-binomial likelihood, so the integral in \cref{eq:eprocess} is a closed-form sum. Second, it is the natural default in the betting-based formulation of safe testing \citep{waudby2024betting}, where the prior plays the role of a uniform prior over wagers. Alternatives that grow faster against a specific point alternative (e.g.\ GROW, \citealp{grunwald2024safe}) require knowing the alternative density and are sensitive to misspecification. Empirically, a Beta$(2,2)$ mixture and a discrete two-point mixture at $\theta \in \{0.4, 0.6\}$ both produce stopping times within ${\approx}8\%$ of the uniform on our calibration pairs.

\paragraph{Calibration.} We calibrate the mixture e-process on simulated paired Bernoulli calibrated to two ARC pairs ($\hat\delta=2.4\%, 7.8\%$; $\hat\rho=0.64, 0.54$; $M{=}600$ trials each). The empirical Type-I rates are $0.035$ and $0.043$ (Monte Carlo SE ${\approx}0.9$~pp, both within nominal $\alpha{=}0.05$); the $H_1$ rejection rate is $97$--$100\%$. The expected stopping time is $1.84$--$2.32{\times}$ the fixed-$n$ McNemar-Connor $N^{\star}$ on resolved pairs: the time-uniform cost of dropping the pre-specified-$n$ assumption.

\section{Closed-source frontier panel (illustrative)}
\label{app:frontier}

We replicate \cref{lem:shortcut} on a closed-source frontier panel as a robustness check on a different model class and evaluation route; we include it as illustrative rather than load-bearing because deployments rotate. Four pay-walled API models (GPT-5.5 and GPT-5.4 on Azure OpenAI; DeepSeek-V3.2 and Llama-4-Maverick-17B-128E-Instruct-FP8 on Azure AI Foundry) were evaluated on a $3{,}000$-item random subsample of MMLU-Pro on 2026-05-08 under a $5$-shot CoT prompt; dropping items where any model errored leaves $N{=}1{,}350$ all-four-scored items. Across the $3$ adjacent-rank pairs the shortcut/$N^{\star}$ ratio sits in $[0.496, 0.500]$ (median $0.497$), replicating Lemma 1's factor-of-two. The Llama-4-Maverick vs.\ DeepSeek-V3.2 pair (gap $2.8$~pp, $\rho=0.48$) is unresolved at $N{=}1{,}350$; the other two are resolved (\cref{fig:mcnemar_frontier}). Exact API identifiers, call timestamps, the verbatim $5$-shot CoT prompt, and the $3{,}000$-item index list (seed $42$ over the MMLU-Pro test split) are in the artifact under \texttt{appendix\_d/}.

\begin{figure}[!ht]
\centering
\includegraphics[width=0.55\columnwidth]{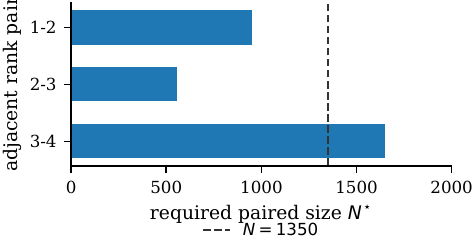}
\caption{Frontier panel on a $3{,}000$-item MMLU-Pro subsample ($N{=}1{,}350$). Bars give $N^{\star}$ for each adjacent-rank pair; the dashed line marks the actual $N$. Llama-4-Maverick vs.\ DeepSeek-V3.2 (rank 3 vs.\ 4) is unresolved at $(0.05, 0.8)$.}
\label{fig:mcnemar_frontier}
\end{figure}

\section{Raw discordance and pair details}
\label{app:discordance}

\Cref{tab:discord_oll,tab:discord_mmlupro} list the per-pair raw discordance counts $(b,c)$, marginals, correlation, and both $p$-value variants for the seven OLL v1 close pairs (\S\ref{sec:resolution-v1}) and the nine MMLU-Pro adjacent-rank pairs (\S\ref{sec:resolution-mmlupro}). These let a reader verify the headline HellaSwag $p_{\chi^2}{=}0.049$ vs.\ $p_{\mathrm{exact}}{=}0.054$ split (\S\ref{sec:intro}, row 4 of \cref{tab:discord_oll}) without re-running the code, and trace each rank pair to its concrete model identifiers.

\refstepcounter{table}\label{tab:discord_oll}%
\begin{center}\footnotesize\vspace{-1ex}
\textbf{Table~\thetable.} Raw discordance and pair details for the seven OLL v1 close pairs ($|\hat\delta|\le 2$pp). $b{=}n_{AB}$ and $c{=}n_{BA}$ are the discordant counts; $\hat\rho$ is the empirical Bernoulli correlation; $N^{\star}$ is the McNemar-Connor required-$N$ from \cref{eq:reqn_exact}. The HellaSwag row reconciles the \S\ref{sec:intro} example: $(b-c)/N = 46/10{,}042 = 0.46$\,pp. The $\approx 2.4{\times}10^{6}$ Winogrande Mistral-I/Llama-3-8B figure reflects $\hat\delta \approx 0$ ($b-c = 1$ out of $241$ discordant pairs) and should be read as ``far beyond resolution'' rather than a precise budget.\\[3pt]
\setlength{\tabcolsep}{4pt}
\begin{tabular}{@{}llllrrrrrrrr@{}}
\toprule
Task & Model A & Model B & $N$ & $\hat p_A$ & $\hat p_B$ & $b$ & $c$ & $\hat\rho$ & $p_{\chi^2}$ & $p_{\mathrm{exact}}$ & $N^{\star}$ \\
\midrule
ARC-C & gemma-7b           & Llama-3-8B-Instruct & 1{,}172 & 0.6109 & 0.6075 & 98  & 94  & 0.66 & 0.773 & 0.829 & 110{,}379 \\
ARC-C & Llama-3-8B-Instruct & Llama-3-8B          & 1{,}172 & 0.6075 & 0.5922 & 81  & 63  & 0.74 & 0.134 & 0.156 & 4{,}081 \\
ARC-C & gemma-7b           & Llama-3-8B          & 1{,}172 & 0.6109 & 0.5922 & 100 & 78  & 0.68 & 0.099 & 0.115 & 3{,}375 \\
HS    & gemma-7b           & Llama-3-8B          & 10{,}042 & 0.8247 & 0.8202 & 295 & 249 & 0.81 & \textbf{0.049} & \textbf{0.054} & 20{,}255 \\
Wino  & Mistral-7B-I-v0.2  & Llama-3-8B          & 1{,}267 & 0.7719 & 0.7711 & 121 & 120 & 0.46 & 0.949 & 1.000 & 2{,}396{,}624 \\
Wino  & gemma-7b           & Mistral-7B-I-v0.2   & 1{,}267 & 0.7845 & 0.7719 & 119 & 103 & 0.49 & 0.283 & 0.314 & 8{,}616 \\
Wino  & gemma-7b           & Llama-3-8B          & 1{,}267 & 0.7845 & 0.7711 & 98  & 81  & 0.59 & 0.204 & 0.232 & 6{,}152 \\
\bottomrule
\end{tabular}
\end{center}

\refstepcounter{table}\label{tab:discord_mmlupro}%
\begin{center}\footnotesize\vspace{-1ex}
\textbf{Table~\thetable.} Raw discordance and model identifiers for the nine MMLU-Pro top-$10$ adjacent-rank pairs ($N{=}12{,}032$). $p_{\chi^2}$ and $p_{\mathrm{exact}}$ are the asymptotic McNemar and exact conditional-binomial variants. $N^{\star}$ here is the IID McNemar--Connor value from \cref{eq:reqn_exact}; cluster-adjusted values are in \cref{tab:cluster}. Model names are the Open LLM Leaderboard v2 display names, lightly abbreviated for layout; full Hugging Face identifiers appear in the artifact at \texttt{experiments/a3\_mmlupro\_mcnemar.csv}.\\[3pt]
\setlength{\tabcolsep}{4pt}
\begin{tabular}{@{}lllrrrrr@{\hspace{0.7em}}r@{\hspace{0.7em}}r@{\hspace{0.7em}}r@{}}
\toprule
Pair & Model A & Model B & $\hat p_A$ & $\hat p_B$ & $b$ & $c$ & $\hat\rho$ & \multicolumn{1}{c}{$p_{\chi^2}$} & \multicolumn{1}{c}{$p_{\mathrm{exact}}$} & \multicolumn{1}{c}{$N^{\star}$} \\
\midrule
1 vs 2 & calme-3.2-78b      & calme-3.1-78b      & 0.7303 & 0.7185 & 253  & 111  & 0.92 & $9.9{\times}10^{-14}$ & $7.1{\times}10^{-14}$ & 1{,}697 \\
2 vs 3 & calme-3.1-78b      & CalmeRys-78B-Orpo  & 0.7185 & 0.7012 & 284  & 76   & 0.93 & ${<}10^{-15}$ & ${<}10^{-15}$ & 778 \\
3 vs 4 & CalmeRys-78B-Orpo  & calme-2.4-rys-78b  & 0.7012 & 0.7002 & 32   & 20   & 0.99 & 0.096 & 0.126 & 34{,}092 \\
4 vs 5 & calme-2.4-rys-78b  & Reflection-70B     & 0.7002 & 0.6341 & 1871 & 1076 & 0.46 & ${<}10^{-15}$ & ${<}10^{-15}$ & 433 \\
5 vs 6 & Reflection-70B     & Arcee-Blitz        & 0.6341 & 0.6153 & 1680 & 1454 & 0.45 & \mbox{$5.4\times10^{-5}$} & \mbox{$5.8\times10^{-5}$} & \mbox{5{,}787} \\
6 vs 7 & Arcee-Blitz        & Homer-Qwen2.5-72B  & 0.6153 & 0.6145 & 1449 & 1439 & 0.49 & \mbox{0.852} & \mbox{0.867} & \mbox{2{,}727{,}127} \\
7 vs 8 & Homer-Qwen2.5-72B  & ultiima-72B-v1.5   & 0.6145 & 0.6054 & 352  & 242  & 0.90 & \mbox{$6.4\times10^{-6}$} & \mbox{$7.3\times10^{-6}$} & \mbox{4{,}628} \\
8 vs 9 & ultiima-72B-v1.5   & Qwen2.5-72B        & 0.6054 & 0.5968 & 787  & 684  & 0.75 & \mbox{$7.2\times10^{-3}$} & \mbox{$7.8\times10^{-3}$} & \mbox{13{,}086} \\
9 vs 10 & Qwen2.5-72B        & QwentileSwap       & 0.5968 & 0.5945 & 1227 & 1200 & 0.58 & 0.584 & 0.598 & 314{,}370 \\
\bottomrule
\end{tabular}
\end{center}

\section{Leave-one-subject-out cluster robustness}
\label{app:loso}

\Cref{sec:cluster} reports a $4/9\to 6/9$ unresolved-count flip on MMLU-Pro top-$10$ adjacent pairs under real subject-level clustering ($K{=}14$ categories). To confirm the flip is not driven by any single category, we recompute the cluster-corrected unresolved count after dropping each MMLU-Pro category in turn and recomputing the per-pair ICC, design effect, and $N^{\star}_{\mathrm{cluster}}$ on the remaining items.

\Cref{tab:loso} lists the LOSO results: across the $14$ drops, the unresolved count stays at $6/9$ for $11$ drops and at $5/9$ for three drops (health, law, psychology). No drop collapses the result below $5/9$, so the cluster-induced flip is not an artifact of a single high-ICC category.

\refstepcounter{table}\label{tab:loso}%
\begin{center}\footnotesize\vspace{-1ex}
\textbf{Table~\thetable.} Leave-one-subject-out (LOSO) recomputation of the cluster-corrected unresolved-pair count on MMLU-Pro top-$10$ adjacent-rank pairs. Each row drops one of the $14$ subject categories and reruns the \S\ref{sec:cluster} pipeline on the remaining items.\\[3pt]
\setlength{\tabcolsep}{4pt}
\begin{tabular}{@{}>{\raggedright\arraybackslash}p{0.66\columnwidth}r@{}}
\toprule
Dropped category & Unresolved \\
\midrule
biology, business, chemistry, computer science, economics, engineering, history, math, other, philosophy, physics ($11$ drops) & $6/9$ \\
health, law, psychology ($3$ drops) & $5/9$ \\
\midrule
Base (no drop, \cref{tab:cluster}) & $6/9$ \\
\bottomrule
\end{tabular}
\end{center}

\section{Cluster-bootstrap CIs for MMLU-Pro ICC}
\label{app:icc_bootstrap}

\Cref{sec:cluster} reports a category-bootstrap stability check on the MMLU-Pro cluster-induced verdict flip. \Cref{tab:icc_bootstrap} lists the per-pair $5$--$95\%$ bootstrap intervals on $\mathrm{ICC}(D)$, design effect, and cluster-corrected $N^{\star}$. Resampling protocol: at each of $B{=}1000$ iterations (seed $42$), draw $K{=}14$ subject categories with replacement, recompute the one-way ANOVA $\mathrm{ICC}$ on the paired-difference series $D_i$ using the resampled cluster structure, derive $\mathrm{DE} = 1 + (\bar m - 1)\,\mathrm{ICC}^+$, and report $N^{\star}_{\mathrm{cluster}} = N^{\star}_{\mathrm{IID}}\cdot\mathrm{DE}$. The IID inputs $(\hat p_A,\hat p_B,\hat\rho,N^{\star}_{\mathrm{IID}})$ are held fixed at their full-data estimates so the CIs isolate cluster-structure uncertainty (full-bootstrap CIs would only be wider). The ``$\Pr(\text{unres.})$'' column reports the fraction of $B$ bootstraps in which $N^{\star}_{\mathrm{cluster}} > N{=}12{,}032$ for that pair.

\refstepcounter{table}\label{tab:icc_bootstrap}%
\begin{center}\scriptsize\vspace{-1ex}
\textbf{Table~\thetable.} Cluster-bootstrap CIs on MMLU-Pro top-$10$ adjacent-rank pairs ($B{=}1000$, $K{=}14$ categories resampled with replacement). Brackets are $5$--$95\%$ percentile bounds on the bootstrap distribution; ``pt'' is the full-data point estimate from \cref{tab:cluster}. ``$\Pr(\text{unr.})$'' is the fraction of bootstrap iterations for which the pair is unresolved at $N{=}12{,}032$. Three pairs (rank $4$ vs $5$, $5$ vs $6$, $6$ vs $7$) drive the cluster-induced verdict tightening and have ICC bounds (bold) well above zero. The bootstrap distribution of the count-of-unresolved-pairs (out of $9$) puts $44.9\%$ mass at $5$, $55.0\%$ at $6$, $0.1\%$ at $4$, and $0\%$ above $6$.\\[3pt]
\setlength{\tabcolsep}{2pt}
\begin{tabular}{lrrrr}
\toprule
Pair & ICC$(D)$ pt $[5,95\%]$ & DE pt $[5,95\%]$ & Cluster $N^{\star}$ pt $[5,95\%]$ & $\Pr(\text{unr.})$ \\
\midrule
$1$ vs $2$  & $0.0004$ $[-0.0003, 0.0012]$ & $1.37$ $[1.00, 1.95]$  & $2{,}327$ $[1{,}697,\,3{,}308]$ & $0.000$ \\
$2$ vs $3$  & $-0.0003$ $[-0.0008, 0.0001]$ & $1.00$ $[1.00, 1.12]$ & $778$ $[778,\,868]$ & $0.000$ \\
$3$ vs $4$  & $0.0002$ $[-0.0005, 0.0008]$ & $1.19$ $[1.00, 1.64]$ & $40{,}660$ $[34{,}092,\,55{,}765]$ & $1.000$ \\
$4$ vs $5$  & $\mathbf{0.036}$ $\mathbf{[0.021, 0.044]}$ & $31.5$ $[17.4, 40.9]$ & $13{,}621$ $[7{,}546,\,17{,}672]$ & $0.552$ \\
$5$ vs $6$  & $\mathbf{0.0067}$ $\mathbf{[0.0026, 0.0104]}$ & $6.74$ $[3.25, 9.78]$ & $39{,}009$ $[18{,}800,\,56{,}617]$ & $0.997$ \\
$6$ vs $7$  & $\mathbf{0.012}$ $\mathbf{[0.004, 0.020]}$ & $11.5$ $[4.12, 17.7]$ & $31.4{\rm M}$ $[11.2, 48.3]{\rm M}$ & $1.000$ \\
$7$ vs $8$  & $-0.0002$ $[-0.0007, 0.0002]$ & $1.00$ $[1.00, 1.15]$ & $4{,}628$ $[4{,}628,\,5{,}314]$ & $0.000$ \\
$8$ vs $9$  & $0.0010$ $[0.0002, 0.0016]$ & $1.88$ $[1.14, 2.36]$ & $24{,}632$ $[14{,}873,\,30{,}865]$ & $1.000$ \\
$9$ vs $10$ & $0.0029$ $[0.0015, 0.0037]$ & $3.52$ $[2.33, 4.08]$ & $1.11{\rm M}$ $[0.73, 1.28]{\rm M}$ & $1.000$ \\
\bottomrule
\end{tabular}
\end{center}

\section{Prospective design validation and multi-arbiter agreement}
\label{app:prospective_multiarbiter}

\Cref{tab:prospective} reports the prospective validation referenced in \S\ref{sec:resolution-v1}; \cref{tab:arbiters} reports the multi-arbiter agreement check on every $|\hat\delta| \leq 2$pp pair.

\refstepcounter{table}\label{tab:prospective}%
\begin{center}\footnotesize\vspace{-1ex}
\textbf{Table~\thetable.} Prospective validation: empirical McNemar power at framework-prescribed $N^{\star}$ on three OLL v1 pairs ($M{=}1000$ bootstrap trials per cell; Monte Carlo SE ${\approx}1.3$~pp at the $0.8$ target); columns probe sub-, on-, and super-prescription.\\[3pt]
\setlength{\tabcolsep}{4pt}
\begin{tabular}{@{}lcccccc@{}}
\toprule
 & & & & \multicolumn{3}{c}{Empirical power at} \\
\cmidrule(lr){5-7}
Pair & $|\delta|$ & $\hat\rho$ & $N^{\star}$ & $0.8\,N^{\star}$ & $N^{\star}$ & $1.2\,N^{\star}$ \\
\midrule
HS, Mistral-I / Llama-I  & $6.3$pp & $0.68$ & $193$ & $0.71$ & $\mathbf{0.83}$ & $0.89$ \\
ARC, Llama-3-8B / gemma  & $7.8$pp & $0.54$ & $294$ & $0.69$ & $\mathbf{0.80}$ & $0.88$ \\
HS, Llama-3-8B / gemma   & $10.1$pp & $0.57$ & $120$ & $0.72$ & $\mathbf{0.81}$ & $0.88$ \\
\bottomrule
\end{tabular}
\end{center}

\refstepcounter{table}\label{tab:arbiters}%
\begin{center}\footnotesize\vspace{-1ex}
\textbf{Table~\thetable.} Four-arbiter agreement on the seven OLL v1 close pairs ($|\hat\delta|\leq 2$pp). \texttt{R}/\texttt{F}: reject/fail-to-reject at $\alpha{=}0.05$. The HellaSwag boundary pair (bold) is the only split verdict.\\[3pt]
\setlength{\tabcolsep}{4pt}
\begin{tabular}{@{}lrrrcc@{}}
\toprule
Pair $\ (\hat\delta$pp$)$ & $\chi^2_1$ & exact & mid-$p$ & CI$\ni 0$ & verdict \\
\midrule
ARC, gem/L-I (0.34)        & .77  & .83  & .77  & yes & FFFF \\
ARC, L-I/L-8B (1.54)       & .13  & .16  & .13  & yes & FFFF \\
ARC, gem/L-8B (1.88)       & .10  & .12  & .10  & yes & FFFF \\
\textbf{HS, gem/L-8B (0.46)} & \textbf{.049} & \textbf{.054} & \textbf{.049} & \textbf{yes} & \textbf{RFRF} \\
Wi, Mi/L-8B (0.08)         & .95  & 1.00 & .95  & yes & FFFF \\
Wi, gem/Mi (1.26)          & .28  & .31  & .28  & yes & FFFF \\
Wi, gem/L-8B (1.34)        & .20  & .23  & .20  & yes & FFFF \\
\bottomrule
\end{tabular}
\end{center}

\section{Reporting checklist}
\label{sec:checklist}

\refstepcounter{table}\label{tab:checklist}%
\begin{center}\footnotesize\vspace{-1ex}
\textbf{Table~\thetable.} Reporting checklist for any paired LLM leaderboard claim. Each row is a direct consequence of an inversion in \S\ref{sec:framework} or its multiplicity adjustment in \S\ref{sec:multiplicity}.\\[3pt]
\setlength{\tabcolsep}{4pt}
\begin{tabular}{@{}ll>{\raggedright\arraybackslash}p{0.42\columnwidth}@{}}
\toprule
Quantity & Definition & Why \\
\midrule
$\hat\delta$               & observed gap                 & headline effect \\
Paired test                & McNemar / boot.\ / $t$       & matches metric \\
$N$                        & paired prompts                & budget \\
$\delta_{\mathrm{MDE}}$    & \cref{eq:mde}                 & current-$N$ resolution \\
$q = N/N^{\star}$           & \cref{def:resratio}           & gap supported when $q\ge 1$ \\
$N^{\star}$ CI             & $B{=}500$ bootstrap           & reveals whether $N^{\star}$ CI straddles $N$ \\
Multiplicity               & Bonf.\ / BH / none           & required for family-level claims only \\
Per-item raw               & $0/1$ matrix                  & third-party check \\
\bottomrule
\end{tabular}
\end{center}

\section{\texttt{llm-power} API}
\label{app:api}

\noindent\texttt{cohens\_h(p1, p2)} -- Cohen's $h$ for two proportions.\\
\texttt{paired\_bootstrap\_delta(scores\_a, scores\_b, ...)} -- prompt-bootstrap CI on $\hat\delta$.\\
\texttt{bootstrap\_power(...)} -- empirical power against a reference score-matrix.\\
\texttt{parametric\_required\_n\_proportions(p1, p2, paired, rho)} -- shortcut $n_h$ (Cohen convention).\\
\texttt{parametric\_required\_n\_paired\_binary(p1, p2, rho)} -- $N^{\star}$ from \cref{eq:reqn_exact}.\\
\texttt{required\_n\_mcnemar(n\_ab, n\_ba, n\_observed)} -- discordance-form McNemar required-$N$.\\
\texttt{parametric\_required\_n\_paired(mean\_diff, sd\_diff)} -- paired-$t$ required-$N$ for graded data.

\end{document}